\documentclass{article} 
\usepackage[final]{colm2026_conference}

\usepackage{microtype}
\usepackage{hyperref}
\usepackage{url}
\usepackage{booktabs}

\usepackage{enumitem}
\usepackage{amsmath, amssymb, amsthm}
\usepackage{mathtools}
\usepackage{tikz}
\usepackage{graphicx}
\usepackage{booktabs}
\usetikzlibrary{arrows.meta, decorations.pathreplacing, calc, positioning}
 
\usepackage{lineno}

\newtheorem{theorem}{Theorem}[section]
\newtheorem{lemma}[theorem]{Lemma}

\newtheorem{corollary}[theorem]{Corollary}
\theoremstyle{definition}
\newtheorem{definition}[theorem]{Definition}
\newtheorem{assumption}[theorem]{Assumption}
\theoremstyle{remark}
\newtheorem{remark}[theorem]{Remark}

\newtheorem*{lemma*}{Lemma}
\newtheorem*{theorem*}{Theorem}
\newtheorem*{corollary*}{Corollary}
\newtheorem*{proposition*}{Proposition}
\newtheorem*{remark*}{Remark}

\newcommand{\R}{\mathbb{R}}
\newcommand{\E}{\mathbb{E}}
\newcommand{\Var}{\mathrm{Var}}
\newcommand{\tr}{\mathrm{tr}}
\newcommand{\diag}{\mathrm{diag}}
\newcommand{\xin}{x_{\mathrm{in}}}

\newcommand{\Dw}{\Delta W}

\definecolor{darkblue}{rgb}{0, 0, 0.5}
\hypersetup{colorlinks=true, citecolor=darkblue, linkcolor=darkblue, urlcolor=darkblue}

\usepackage{multirow}
\usepackage{makecell}
\usepackage{colortbl}
\usepackage{subcaption}
\usepackage{pgfplots}
\usetikzlibrary{pgfplots.groupplots}
\pgfplotsset{compat=1.18}
\usepackage{pifont}

\ifodd 1

\newcommand{\com}[1]{\textbf{\color{red}(comment: #1)}} 
\newcommand{\res}[1]{\textbf{\color{magenta}(RESPONSE: #1)}} 
\else

\newcommand{\com}[1]{}
\newcommand{\res}[1]{}
\fi

\newcommand{\spm}[1]{\textsc{#1}}
\newcommand{\scr}[1]{\textsc{#1}}
\definecolor{rowgray}{gray}{0.93}
\definecolor{goodgreen}{RGB}{0,140,0}
\definecolor{badred}{RGB}{200,0,0}

\title{When Pruning Meets Interpretability: Preserving Sparse Autoencoder Robustness in LLMs}

\author{Suchit Gupte, Xueru Zhang \& Mohammad Mahdi Khalili 
\\
Department of Computer Science\\
The Ohio State University\\
Columbus, OH 43210, USA \\
\texttt{\{gupte.31, zhang.12807, khalili.17\}@osu.edu} \\
}

\begin{document}

\ifcolmsubmission
\linenumbers
\fi

\maketitle

\begin{abstract}
Sparse autoencoders (SAEs) are widely used to interpret the internal representations of large language models (LLMs), yet their reliability under post-hoc model compression remains poorly understood. We present a systematic study of how pruning affects SAE behavior and theoretically show that, for a fixed SAE, its impact is governed by perturbation energy, a covariance-weighted norm. This perspective exposes a key limitation of \textsc{magnitude} pruning: by ignoring activation geometry, it distorts the learned representation space and degrades SAE functionality. Activation-aware methods such as \textsc{Wanda} and \textsc{SparseGPT}, in contrast, implicitly control perturbation energy and are therefore substantially more robust at preserving SAE behavior. We further reveal a consistent structural vulnerability across all pruning methods: middle layers are significantly more sensitive to pruning than early or late layers. Guided by this insight, we propose a Layer-wise sparsity allocation strategy, achieving lower perplexity under the same average pruning sparsity. Experiments across four model architectures validate our theoretical findings. The code is publicly available at \href{https://github.com/osu-srml/sae-robustness-under-pruning/tree/main}{sae-robustness-under-pruning}.

\end{abstract}

\section{Introduction} \label{sec:intro}

The superposition hypothesis~\citep{elhage2022toy} posits that neural networks represent more features than they have neurons, giving rise to polysemanticity where individual neurons respond to multiple unrelated concepts. Sparse autoencoders (SAEs)~\citep{sae} have emerged as the leading method for resolving this: by training an overcomplete sparse dictionary on a model's activations, they recover monosemantic, interpretable features~\citep{bricken2023monosemanticity,zhu2026abstopk}, an approach that scales to frontier models~\citep{templeton2024scaling, gemmascope}. SAE-based analysis has consequently become a core workflow in mechanistic interpretability~\citep{shu2025survey, sharkey2025open}. SAEs are used to identify circuits~\citep{conmy2023automated, wang2022ioi, absorption}, erase spurious correlations~\citep{scrtpp}, and test causal hypotheses about model behavior. All of these applications rely on the implicit assumption that any SAE provides a faithful and reliable decomposition of the underlying activations to which it is applied.

Independently, weight pruning has become a standard post-hoc technique for reducing the computational cost of deploying large language models. Methods ranging from classical \textsc{magnitude} pruning~\citep{magnitude} to modern one-shot approaches such as \textsc{SparseGPT}~\citep{sparsegpt} and \textsc{Wanda}~\citep{wanda} can remove a substantial fraction of model weights with only modest degradation in downstream task performance. Given the prohibitive cost of deploying large dense models, compression techniques including pruning, quantization, and distillation are standard tools in the open-weight deployment pipeline ~\citep{pruning1, pruning2, pruning3,cai2024privacyaware,zhu2025an}, often applied after interpretability analysis has been conducted on the dense checkpoint and any associated SAEs have been trained.  Crucially, the criteria by which these methods select weights to prune: whether by magnitude alone, by activation statistics, or by second-order Hessian, differ substantially in how they perturb the model's internal activation distribution, a distinction that has received little attention outside of predictive performance evaluations.



These two workflows are inherently in tension. An SAE is trained on the activation distribution of a specific dense model, whereas pruning alters that model's weight matrices, perturbing all downstream activations. When these perturbations remain small relative to the SAE's training distribution, the SAE may continue to function approximately as intended. Otherwise, its latents can fail silently~\citep{absorption, scrtpp, gupte2025transferabilitysparseautoencodersinterpreting}, and importantly, such failures may not be detected by standard language modeling benchmarks~\citep{egashira2025fewer, sharkey2025open}. Despite this, there is currently no theoretical framework that characterizes when pruning preserves SAE validity, nor an explanation for why different pruning methods lead to varying degrees of degradation.

A natural response to a changed model is to retrain the interpretability tooling itself. Crosscoders~\citep{lindsey}, for instance, train a joint dictionary across two model states (e.g., base and fine-tuned) to discover what changed between them. This is effective, but it requires training a new dictionary for each pair of model variants, precisely the retraining cost we aim to avoid. Our work instead asks: \textit{given an SAE already trained on a dense model, under what conditions does it remain valid after pruning, without retraining?} This is the more common deployment scenario, since organizations that have invested resources in SAE training need to know whether existing analysis transfers to a compressed model rather than needing to characterize what changed. The two approaches are thus complementary: crosscoders are the right tool for understanding a specific change, whereas answering the transfer question requires a theoretical account of when pruning leaves an existing SAE intact, which is exactly what is missing. This paper provides that account. 

We begin by theoretically examining the impact of \textsc{magnitude}, \textsc{SparseGPT}, and \textsc{Wanda} pruning, and show that SAE degradation is governed by \textbf{perturbation energy} ($\varepsilon^2$), a covariance-weighted norm that weights perturbations more heavily along high-variance input directions. This perspective establishes a formal separation among the three pruning methods: each method can be seen as minimizing a mathematically distinct approximation of $\varepsilon^2$. \textsc{Magnitude} pruning, which ignores activation geometry, induces the largest perturbation energy and the most severe SAE degradation, whereas activation-aware methods such as \textsc{Wanda} and \textsc{SparseGPT} implicitly control this quantity and are substantially more robust at preserving SAE behavior.

We validate our theory using four SAEBench~\citep{saebench} metric categories: Core~\citep{saebench}, Feature Absorption~\citep{absorption}, SCR, and TPP~\citep{scrtpp}, which together capture the full range of qualitative SAE failure modes. Beyond comparing methods, our experiments reveal a consistent structural vulnerability: \textbf{middle layers are significantly more sensitive to pruning than early or late layers}, across all methods and sparsity levels. Building on this insight, we explore a \textbf{layer-wise sparsity allocation strategy} that preserves SAE quality and achieves lower perplexity while maintaining the same average sparsity level.

The remainder of the paper is organized as follows. Sec.~\ref{sec:prelim} presents background on sparse autoencoders and formulates pruning as activation perturbation. Sec.~\ref{sec: theory} develops the perturbation-theoretic framework and derives the formal separation among pruning methods. Sec.~\ref{sec:saebench} argues for the sufficiency of the SAEBench evaluation suite in capturing all relevant SAE failure modes. Sec.~\ref{sec:setup} describes the experimental setup, and Sec.~\ref{sec:results} presents the results. Sec~\ref{sec:conclusion} concludes the paper and discusses limitations and future directions. 

\section{Preliminaries}
\label{sec:prelim}
 
\subsection{Sparse autoencoders for mechanistic interpretability} \label{subsec:saefmi}
 
\begin{definition}[Sparse Autoencoder]\label{def:sae}
A sparse autoencoder \citep{sae} is a map $f_\theta : \R^d \to \R^d$ with encoder $E_\theta : \R^d \to \R^m$ and decoder $D_\theta : \R^m \to \R^d$, where $m \gg d$:
\begin{equation}\label{eq:sae}
  f_\theta(x) = D_\theta\!\bigl(\sigma(E_\theta(x))\bigr), \quad
  E_\theta(x) = W_{\mathrm{enc}}\, x + b_{\mathrm{enc}}, \quad
  D_\theta(z) = W_{\mathrm{dec}}\, z + b_{\mathrm{dec}},
\end{equation}
where $\sigma$ is an arbitrary activation function, $W_{\mathrm{enc}} \in \R^{m \times d}$, $W_{\mathrm{dec}} \in \R^{d \times m}$.
\end{definition}
 
The SAE is trained on activations $x \sim \mathcal{D}$ from the model's forward pass to minimize:
\begin{equation}\label{eq:sae_loss}
  \mathcal{L}_{\mathrm{SAE}} = \E_{x \sim \mathcal{D}}\!\left[\|x - f_\theta(x)\|^2 + \lambda \|\sigma(E_\theta(x))\|_1\right].
\end{equation}
 
A key property we assume is that trained SAEs are Lipschitz continuous.
 
\begin{assumption}[Lipschitz continuity of the SAE] \label{asm:lip}
The trained SAE $f_\theta$ is $L$-Lipschitz: 
\begin{equation}
  \|f_\theta(x') - f_\theta(x)\| \leq L\, \|x' - x\|~ \forall x, x' \in \R^d.
\end{equation}
\end{assumption}
This assumption holds for standard SAE architectures whenever coordinate-wise activation function $\sigma$ is Lipschitz, as satisfied by common choices such as ReLU and GELU.
\subsection{Weight pruning as activation perturbation} \label{subsec:wpaap}
 
Consider a linear layer with weight matrix $W \in \R^{d \times p}$ producing activation $x = W\xin$, where $\xin \in \R^p$ is the layer input. Pruning modifies the weight matrix by zeroing a subset of entries $S \subset [d] \times [p]$, resulting in
$W' = W + \Dw$. The corresponding pruned activation is
\begin{equation}\label{eq:pert}
  x' = W'\xin = x + \delta, \qquad \delta \coloneqq \Dw\,\xin.
\end{equation}
We next summarize the objectives of common pruning methods.
\begin{definition}[Pruning objectives] \label{def:pruning}
Let $S$ be the pruning mask at target sparsity $s$.
\begin{enumerate}[noitemsep,topsep=0em,leftmargin=2em]
  \item \textbf{\textsc{magnitude} pruning} \citep{magnitude} selects $S$ to minimize $\|\Dw\|_F^2 = \sum_{(i,j)\in S} w_{ij}^2$.
  \item \textbf{\scr{Wanda}} \citep{wanda} selects $S$ to minimize $\sum_{(i,j)\in S} w_{ij}^2 \cdot \E \left[(\xin)^2_j\right]$.
  \item \textbf{\spm{SparseGPT}} \citep{sparsegpt} solves layer-wise regression to approximately minimize $\E \left[\|\Dw\,\xin\|^2\right]$ via second-order optimization.
\end{enumerate}
\end{definition}

\section{Perturbation Theory: SAE Degradation under Pruning} \label{sec: theory}

Next, we theoretically examine how pruning affects SAE behavior. Specifically, we develop a unified perturbation-theoretic analysis that bounds SAE degradation in terms of a data-dependent quantity, which we call the perturbation energy, together with the SAE's Lipschitz constant. This quantity provides a theoretical foundation for understanding the varying impact of different pruning techniques and explains why SAEs can be robust to some methods while vulnerable to others. Proof details are provided in Appendix~\ref{app:detailed_proofs}.

 

\textbf{Impact of pruning on SAE.} Recall from Sec.~\ref{subsec:wpaap} that pruning replaces $W$ with $W'=W+\Dw$, shifting the activation by $\delta = \Dw\,\xin$.  Because the SAE $f_\theta$ is $L$-Lipschitz (Assumption~\ref{asm:lip}), the triangle
inequality immediately gives a per-sample reconstruction bound.
\begin{lemma} \label{lem:recon}
Let $f_\theta$ be $L$-Lipschitz and let $x' = x + \delta$.  Then:
\begin{equation}\label{eq:lemma_restated}
  \|x' - f_\theta(x')\| \;\leq\; \|x - f_\theta(x)\| + (1+L)\|\delta\|.
\end{equation}
\end{lemma}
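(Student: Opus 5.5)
The plan is to decompose the post-pruning residual $x' - f_\theta(x')$ around the dense residual $x - f_\theta(x)$, apply the triangle inequality, and control the two leftover terms with Assumption~\ref{asm:lip} and the definition of $\delta$. The starting point is the exact algebraic identity
\begin{equation*}
  x' - f_\theta(x') \;=\; \bigl(x - f_\theta(x)\bigr) \;+\; (x' - x) \;-\; \bigl(f_\theta(x') - f_\theta(x)\bigr),
\end{equation*}
which holds by adding and subtracting $x$ and $f_\theta(x)$.

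\textbf{Bounding each term.} Taking norms and using the triangle inequality gives
\[
\|x' - f_\theta(x')\| \leq \|x - f_\theta(x)\| + \|x' - x\| + \|f_\theta(x') - f_\theta(x)\|.
\]
I would treat the three terms in turn:
\begin{itemize}[noitemsep]
  \item the first is already the dense reconstruction error and is left as is;
  \item the second equals $\|\delta\|$, since $x' = x + \delta$ by Eq.~\eqref{eq:pert};
  \item the third is at most $L\|x' - x\| = L\|\delta\|$ by Assumption~\ref{asm:lip}.
\end{itemize}
Summing the three bounds yields $\|x - f_\theta(x)\| + (1+L)\|\delta\|$, which is the claimed inequality.

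\textbf{Where care is needed.} There is no substantive obstacle. The only point to check is that the Lipschitz property is global, holding for all $x, x' \in \R^d$ as Assumption~\ref{asm:lip} states. This matters because the pruned activation $x'$ may lie outside the SAE's training distribution $\mathcal{D}$, and the argument must not rely on any property of $f_\theta$ that holds only on that distribution.
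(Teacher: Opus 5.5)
Your proof is correct and takes the same approach as the paper. The paper applies the triangle inequality twice: first it inserts $f_\theta(x)$, then it splits $x' - f_\theta(x)$ as $(x - f_\theta(x)) + \delta$. That is your three-term decomposition carried out in two steps, and it is followed by the same Lipschitz bound $\|f_\theta(x) - f_\theta(x')\| \le L\|\delta\|$.
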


Squaring, taking expectations, and applying Cauchy--Schwarz to the cross term yields Thm.~\ref{thm:main}, which bounds expected degradation in SAE reconstruction quality after pruning.
 
\begin{theorem}[Expected reconstruction degradation]\label{thm:main}
Under Assumption~\ref{asm:lip},
\begin{equation}\label{eq:main}
  \E\bigl[\|x'-f_\theta(x')\|^2\bigr]
  \;\leq\;
  \underbrace{\E\bigl[\|x-f_\theta(x)\|^2\bigr]}_{\text{intrinsic SAE error}} + \,(1+L)^2\,\varepsilon^2 + 2(1+L)\sqrt{\E[\|x-f_\theta(x)\|^2]\cdot\varepsilon^2}
\end{equation}

where the \textbf{perturbation energy} $\varepsilon^2$ admits the covariance
decomposition
\begin{equation}\label{eq:eps}
  \varepsilon^2 \;:=\; \E \left[\|\delta\|^2\right]
  \;=\; \E \left[\|\Dw\,\xin\|^2\right]
  \;=\; \tr \bigl(\Dw\,\Sigma_{\xin}\,\Dw^\top\bigr),
\end{equation}
with $\Sigma_{\xin}=\E[\xin\xin^\top]$ the second-moment matrix of
layer inputs.
\end{theorem}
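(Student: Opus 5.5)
The plan is to derive Theorem~\ref{thm:main} directly from the per-sample bound in Lemma~\ref{lem:recon}. Write $a := \|x - f_\theta(x)\|$ and $b := \|\delta\|$, both nonnegative random variables; the randomness is over $\xin$, with $x$, $x'$ and $\delta$ all determined by $\xin$. Lemma~\ref{lem:recon} gives $\|x'-f_\theta(x')\| \le a + (1+L)b$ pointwise. I would first justify the lemma briefly. Since $x' - f_\theta(x') = (x - f_\theta(x)) + \delta - (f_\theta(x') - f_\theta(x))$, the triangle inequality together with Assumption~\ref{asm:lip} bounds the norm by $a + b + Lb$.

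Because both sides of the lemma are nonnegative, I can square them:
\begin{equation*}
\|x'-f_\theta(x')\|^2 \le a^2 + (1+L)^2 b^2 + 2(1+L)\,ab .
\end{equation*}
Taking expectations and using linearity splits the right-hand side into three terms. The first is $\E[a^2]$, the intrinsic SAE error. The second is $(1+L)^2\E[b^2] = (1+L)^2\varepsilon^2$, by the definition $\varepsilon^2 = \E\|\delta\|^2$. The third is the cross term $2(1+L)\E[ab]$, and Cauchy--Schwarz in $L^2$ gives $\E[ab] \le \sqrt{\E[a^2]\,\E[b^2]}$. Substituting these three pieces yields exactly \eqref{eq:main}. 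One caveat: the expectations must be finite. This holds whenever $\xin$ has finite second moments, since $f_\theta$ is Lipschitz and therefore of linear growth, so $a$ and $b$ are square-integrable. I would state this as a standing integrability condition.

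Finally, I would verify the covariance identity \eqref{eq:eps}. Since $\delta = \Dw\,\xin$,
\begin{equation*}
\|\delta\|^2 = \xin^\top \Dw^\top \Dw\, \xin = \tr\bigl(\Dw\, \xin \xin^\top \Dw^\top\bigr)
\end{equation*}
by the cyclic property of the trace. The mask, and hence $\Dw$, is fixed (deterministic given calibration), so I can exchange expectation with the trace and the linear maps. This gives $\tr(\Dw\,\Sigma_{\xin}\,\Dw^\top)$ with $\Sigma_{\xin} = \E[\xin\xin^\top]$.

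I expect no serious obstacle; the argument is elementary. The one point needing care is the interpretation of $\Dw$ in the trace identity. For SparseGPT the weights are updated using calibration data, and for Wanda the mask depends on calibration statistics. The identity requires $\Dw$ to be treated as fixed, meaning independent of the evaluation sample $\xin$, when taking $\E$. I would state this explicitly, and note that otherwise \eqref{eq:eps} holds with $\Sigma_{\xin}$ taken under the evaluation distribution, conditional on the calibration set.
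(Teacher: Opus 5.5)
Your proposal is correct and follows the paper's proof essentially step for step. Both use the triangle-inequality/Lipschitz per-sample bound, then squaring, linearity of expectation with Cauchy--Schwarz on the cross term, and the trace identity for $\varepsilon^2$ (you apply cyclicity before taking expectations, whereas the paper uses $\E[v^\top M v]=\tr(M\,\E[vv^\top])$ with $M=\Dw^\top\Dw$ and then cycles); your remarks on square-integrability and on treating $\Dw$ as independent of the evaluation sample are sound refinements that the paper leaves implicit.
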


Although Thm.~\ref{thm:main}  is an upper bound, it provides insight into what drives SAE reconstruction degradation after pruning. The bound decomposes into two components: (1) \textit{intrinsic SAE error} which reflects how well the SAE was trained on the original model and is independent of pruning; (2) \textit{perturbation energy}  $\varepsilon^2$  which captures the direct damage caused by pruning. The third term captures their interaction, showing that a poorly trained SAE amplifies the effect of pruning: even small  $\varepsilon^2$ can cause disproportionately large degradation when the intrinsic error is large.

The bound also depends on the Lipschitz constant $L$, a property of the SAE architecture and training that is independent of the pruning method: both the $(1 + L)^2$ and cross terms grow with $L$, so a less smooth SAE amplifies any given perturbation energy. Since we hold SAE weights fixed throughout all experiments, $L$ is constant across pruning methods, making $\varepsilon^2$ the only operative variable when comparing methods; all method-level statements in this paper should be read under this fixed-SAE assumption. 

 
If we write the eigen decomposition $\Sigma_{\xin}=\sum_i\lambda_i v_i v_i^\top$
with $\lambda_1\geq\cdots\geq\lambda_p>0$, then
\begin{equation}\label{eq:eps_expand}
  \varepsilon^2 = \sum_{i}\lambda_i\,\|\Dw\,v_i\|^2.
\end{equation}
This is a \emph{covariance-weighted} squared norm of $\Dw$: each term $\lambda_i\,\|\Dw\,v_i\|^2$ quantifies how strongly $\Dw$ perturbs the
$i$-th principal direction of the input distribution, scaled by the variance along that direction. Perturbations aligned with high-variance directions contribute most to $\varepsilon^2$, while those along low-variance directions have little effect. 
The impact of pruning on SAEs therefore depends not only on how much the weights change, but on how those changes align with the input covariance structure, and pruning methods differ precisely in how well they account for this geometry.


 
 
\textbf{Separation among pruning methods.} The covariance weighting in \eqref{eq:eps_expand} induces a fundamental distinction between \textsc{magnitude} and activation-aware pruning methods.
 
\begin{corollary}[Comparison of pruning methods]\label{cor:paradigm}
At any target sparsity $s$:
\begin{itemize}[noitemsep,topsep=0em,leftmargin=2em]
\item \textbf{\textsc{magnitude} pruning} minimizes $\|\Dw\|_F^2=\tr(\Dw\,I\,\Dw^\top)$, which coincides with minimizing $\varepsilon^2$ \emph{only} when $\Sigma_{\xin}\propto I$.
  \item \textbf{\scr{Wanda}} minimizes
  $\tr(\Dw\, \diag(\E[\xin\xin^\top])\, \Dw^\top)$,
        a diagonal approximation to $\varepsilon^2$.
  \item \textbf{\spm{SparseGPT}} approximately minimizes $\varepsilon^2$ directly via
        second-order information.
\end{itemize}
\end{corollary}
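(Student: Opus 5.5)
The plan is to read each bullet of Corollary~\ref{cor:paradigm} off the covariance form of the perturbation energy in Theorem~\ref{thm:main}, $\varepsilon^2=\tr(\Dw\,\Sigma_{\xin}\,\Dw^\top)$. Each pruning objective in Definition~\ref{def:pruning} will be rewritten as a quadratic form $\tr(\Dw\,M\,\Dw^\top)$ for a suitable positive semidefinite $M$. The comparison then reduces to comparing $M$ with $\Sigma_{\xin}$. Throughout, $\Dw$ is supported on the mask $S$ with $(\Dw)_{ij}=-w_{ij}$ for $(i,j)\in S$ (no weight update for \textsc{magnitude} and \scr{Wanda}). This gives the identity
\[
\tr(\Dw\,M\,\Dw^\top)=\sum_{i}\sum_{j,k}(\Dw)_{ij}M_{jk}(\Dw)_{ik}.
\]

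\textbf{\textsc{Magnitude}.} Take $M=I$. The identity gives $\tr(\Dw\,\Dw^\top)=\|\Dw\|_F^2=\sum_{(i,j)\in S}w_{ij}^2$, which is exactly the \textsc{magnitude} objective.
\begin{itemize}[noitemsep,topsep=0em,leftmargin=2em]
\item The ``if'' direction: when $\Sigma_{\xin}=cI$ with $c>0$, we get $\varepsilon^2=c\|\Dw\|_F^2$. The two objectives then have the same minimizers over masks of size $s\cdot dp$.
\item The ``only if'' direction: the word ``only'' is read as ``not in general''. I will show that if $\Sigma_{\xin}\not\propto I$, there is a weight matrix $W$ for which the minimizers differ.
\item Witness for a non-constant diagonal: suppose $(\Sigma_{\xin})_{jj}\neq(\Sigma_{\xin})_{kk}$. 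Take a single row with two entries of equal magnitude in columns $j,k$ and prune one entry. Magnitude is indifferent between them, but $\varepsilon^2$ strictly prefers pruning the column with the smaller second moment. Perturbing the magnitudes slightly makes magnitude strictly choose the wrong column.
\item Witness for constant diagonal but nonzero off-diagonal: suppose $\Sigma_{jk}\neq0$. Prune two entries of one row, in columns $j$ and $k$, with signs chosen so that the cross term $2w_{ij}w_{ik}\Sigma_{jk}$ changes which pair of columns is optimal.
\end{itemize}
Building these witnesses is the main obstacle. The rest is bookkeeping, so I would state the separation as existence of a counterexample rather than failure on every $W$.

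\textbf{\scr{Wanda}.} Take $M=\diag(\Sigma_{\xin})$. The identity collapses to $\sum_i\sum_j(\Dw)_{ij}^2(\Sigma_{\xin})_{jj}=\sum_{(i,j)\in S}w_{ij}^2\,\E[(\xin)_j^2]$, which is the \scr{Wanda} objective of Definition~\ref{def:pruning}. It differs from $\varepsilon^2$ exactly by the off-diagonal cross terms $\sum_i\sum_{j\neq k}(\Dw)_{ij}(\Dw)_{ik}\Sigma_{jk}$. This justifies calling it a diagonal approximation, and it is exact when $\Sigma_{\xin}$ is diagonal. Wanda's per-output (row-wise) comparison group does not affect this, since the objective is separable across rows.

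\textbf{\spm{SparseGPT}.} Definition~\ref{def:pruning} states that its objective is $\E\|\Dw\,\xin\|^2$. By linearity of trace and expectation, this equals $\E[\tr(\Dw\,\xin\xin^\top\Dw^\top)]=\tr(\Dw\,\Sigma_{\xin}\,\Dw^\top)=\varepsilon^2$. So \spm{SparseGPT} targets $\varepsilon^2$ itself, with full $\Sigma_{\xin}$ (the Hessian $2\Sigma_{\xin}$ of the layer-wise regression). Here $\Dw$ also includes the compensating updates on unpruned weights. The word ``approximately'' is inherited from the greedy, OBS-style mask selection and is not something to prove.
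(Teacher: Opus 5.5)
Your proposal is correct and follows essentially the same route as the paper. Each pruning objective is written as $\tr(\Dw\,M\,\Dw^\top)$ with $M=I$, $M=\diag(\Sigma_{\xin})$, and $M=\Sigma_{\xin}$ respectively, and compared against the identity $\varepsilon^2=\tr(\Dw\,\Sigma_{\xin}\,\Dw^\top)$ from Theorem~\ref{thm:main}. The only substantive difference is the \textsc{magnitude} clause: the paper just observes that the two trace forms agree as functions only when $\Sigma_{\xin}=I$, while your argument works at the level of minimizers (invariance under $\Sigma_{\xin}=cI$, plus the two counterexample witnesses), which matches the statement's ``$\Sigma_{\xin}\propto I$'' and is the more faithful reading of ``coincides with minimizing.''
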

 Corollary~\ref{cor:paradigm} highlights that the three pruning methods differ fundamentally in how they approximate $\varepsilon^2$: \textsc{Magnitude} pruning, by ignoring the covariance structure of the activations, approximates $\varepsilon^2$ poorly and consequently induces larger perturbation energy and more severe SAE degradation. In contrast, \textsc{Wanda} and \textsc{SparseGPT} implicitly account for this structure, better controlling $\varepsilon^2$ and thus more effectively preserving SAE behavior.

This separation can be made precise: the three methods form a natural hierarchy in how faithfully they approximate the true objective $\varepsilon^2$:
\begin{equation}
  \underbrace{\tr(\Dw\, I\, \Dw^\top)}_{\text{\textsc{magnitude}}}
  \;\prec\;
  \underbrace{\tr(\Dw\, \diag(\Sigma_{\xin})\, \Dw^\top)}_{\text{\scr{Wanda}}}
  \;\prec\;
  \underbrace{\tr(\Dw\, \Sigma_{\xin}\, \Dw^\top)}_{\text{SparseGPT}},
\end{equation}
where $\prec$ denotes ``is a coarser approximation of the true
perturbation energy than''. \textsc{magnitude} pruning ignores $\Sigma_{\xin}$ entirely; \scr{Wanda} uses only its diagonal; SparseGPT uses the full matrix, mirroring exactly the ordering of SAE degradation we observe empirically in Sec \ref{sec:results}.

\section{From Theory to Evaluation: SAEBench as a Sufficient Validation Suite}\label{sec:saebench}
Sec.~\ref{sec: theory} shows that, for a fixed SAE, $\varepsilon^2$ governs SAE degradation under pruning. Validating this empirically requires metrics that collectively cover every way an SAE can fail when its input distribution shifts. We argue that the four SAEBench categories, \emph{Core}, \emph{Feature Absorption}, \emph{SCR}, and \emph{TPP}, are jointly sufficient as they cover three orthogonal failure modes. Omitting any one would leave a blind spot in the validation. We describe each failure mode and its corresponding metric in turn, then argue their joint sufficiency (see Appendix~\ref{app:metrics} for metric definitions). 



\textbf{Failure Mode 1: Reconstruction Degrades.} When pruning shifts the activation distribution, the SAE may no longer accurately reconstruct its inputs. Since the SAE's reconstruction is inserted back into the model's forward pass, degraded reconstruction directly corrupts the model's predictions. The \underline{\textbf{Core}} metrics \citep{saebench} detect this failure by measuring how well the SAE's reconstruction preserves model behavior, using the KL divergence score, CE loss score, explained variance, MSE, and cosine similarity (see Appendix~\ref{app:metrics:core}). Preserved Core metrics indicate that the SAE remains within its effective reconstruction region after pruning. However, Core metrics are aggregated over all latents and tokens, making them insensitive to localized failures: a score of 0.99 is consistent with 5\% of latents behaving incorrectly, provided the rest compensate. The SAE can silently reorganize its internal representations while keeping aggregate scores high, leaving the failure modes below entirely undetected.

\textbf{Failure Mode 2: Features Break.} 
Even when aggregate reconstruction appears intact, individual latents may fail. A latent that should activate on a given input may go silent (false negative), or one that should be silent may fire spuriously (false positive). When this happens, the SAE no longer provides a faithful, interpretable decomposition of the activations, even though nothing appears wrong at the aggregate level. \underline{\textbf{Feature Absorption}} \citep{absorption} detects this failure by testing whether the SAE maintains clean feature separation after pruning. Using a first-letter classification task, it identifies cases where an expected latent fails to activate and checks whether a semantically unrelated latent has absorbed its information. A lower absorption rate indicates better feature separation. Thus, the absorption metric goes beneath the aggregate surface and checks whether a feature that should represent ``starts with S'' still fires on S-tokens, or whether its information has been absorbed by a semantically unrelated latent. Without this metric, we could observe perfect Core scores while the SAE has been silently rearranged into an uninterpretable state.

\textbf{Failure Mode 3: Interventions Stop Working.}
The primary use of SAEs in interpretability is causal intervention: ablating specific latents to test whether they causally mediate a concept or behavior. Pruning can break this connection: a latent may still activate on the right inputs but no longer causally influence the model's output as expected, or its ablation may collaterally affect unrelated concepts.  \underline{\textbf{Spurious Correlation Removal (SCR)}} \citep{scrtpp} detects this failure by testing whether latent ablations still produce their intended causal effect after pruning: if ablating latents $S_c$ removed a spurious ``$\texttt{gender}\to\texttt{profession}$" correlation in the dense model, SCR checks whether the same ablation achieves the same effect after pruning. Without SCR, we could observe latents that activate correctly yet have lost their causal influence: the latent fires on the right inputs, but ablating it no longer changes the model's behavior. \underline{\textbf{Targeted Probe Perturbation (TPP)}} \citep{scrtpp} complements SCR by testing \textit{selectivity}: ablating latents for concept $c$ should degrade performance on class $c$ but leave other classes unaffected. Without TPP, we could conclude that interventions remain effective while missing that they have become non-specific, damaging concepts they were never intended to affect.

\textbf{Joint Sufficiency.}
Together, these metrics form a logical chain that mirrors the structure of our perturbation theory. \textbf{Core} verifies that $\varepsilon^2$ is small enough for aggregate reconstruction to be preserved (Thm.~\ref{thm:main}). \textbf{Feature Absorption} verifies that $\varepsilon^2$ is small enough relative to activation margins, ensuring individual latents remain intact. \textbf{SCR} verifies that surviving features retain their causal role, so that latent-level interventions still produce the intended effect. \textbf{TPP} verifies that these causal effects remain selective, that concepts stay disentangled, not merely detectable. This progression from aggregate fidelity, to feature-level correctness, to interventional validity, to interventional selectivity covers the full stack of properties that make SAEs useful as an interpretability tool. If all four are preserved under pruning, every standard SAE-based analysis transfers from the dense model to the pruned model.

\section{Experimental Setup}
\label{sec:setup}
 
\textbf{Models and SAEs.}
To isolate the effect of pruning from SAE training variability, we hold all SAE weights fixed and use pretrained checkpoints from SAELens~\citep{saelens}. Our model selection is governed by the availability of high-quality pretrained SAEs.  By fixing pretrained SAEs, we ensure that any observed degradation is attributable solely to pruning and not to SAE training choices. This consideration leads us to four model--SAE pairs spanning roughly two orders of magnitude in parameter count: \texttt{pythia-70m}~\citep{pythia, pythiasaes}, \texttt{gemma-2-2b}~\citep{gemma2, gemmascope}, \texttt{gemma-2-9b}~\citep{gemma2, gemmascope}, and \texttt{mistral-7b}~\citep{mistral, mistralsaes}, all of which have publicly available, well-validated SAE checkpoints. Our main experiments focus on \texttt{gemma-2-2b}, which benefits from the most comprehensive SAE coverage among the four models. This model has pretrained SAEs available at every residual-stream layer (layers 0--25), enabling the fine-grained layer-level analysis that is central to our investigation.  Results for all three remaining models are reported in Sec.~\ref{sec:results} as a cross-model generalisation study, where we verify that the theoretical ordering and layer sensitivity patterns observed on \texttt{gemma-2-2b} hold across architectures and scales.


\textbf{Pruning methods.} We apply three pruning methods post-hoc to each dense checkpoint: \textsc{magnitude} pruning~\citep{magnitude}, \textsc{Wanda}~\citep{wanda}, and \textsc{SparseGpt}~\citep{sparsegpt} (see Sec.~\ref{subsec:wpaap} for an overview). No retraining or weight updates are performed after pruning. Calibration data for \textsc{Wanda} and \textsc{SparseGpt} consists of 128 samples from OpenWebText~\citep{Gokaslan2019OpenWeb}. We conduct experiments at 50\% sparsity, a practically relevant operating point at which all three methods remain functional and differences in SAE degradation are clearly visible.  To characterise how SAE degradation scales with the degree of compression, we additionally evaluate at 25\% and 40\% sparsity, allowing us to assess whether the layer sensitivity patterns and method ordering identified at 50\% are consistent across sparsity levels or specific to that operating point.

\textbf{Evaluation.} For each pruned model variant, we compute the four SAEBench~\citep{saebench} metric categories described in Sec.~\ref{sec:saebench}: Core (KL divergence score, CE loss score, explained variance, cosine similarity, MSE), Feature Absorption, SCR, and TPP. Evaluations are run on the residual output of every layer (layers 0--25 for \texttt{gemma-2-2b}) to expose layer-level sensitivity patterns. For SCR and TPP, we adopt $k = 10$ as the canonical ablation threshold, as it lies in the stable regime of the threshold sensitivity curve and produces the clearest separation between pruning methods (see Appendix~\ref{app:metrics}). All results are reported both as raw metric values and as percentage change from the dense baseline.

\section{Results} \label{sec:results}

\begin{table*}[t]
\centering
\setlength{\tabcolsep}{3pt}
\small
\begin{tabular}{@{}llr ccccc cc c c@{}}
\toprule
\multirow{2}{*}{\textbf{Layers}} &\multirow{2}{*}{\textbf{Method}} &\multirow{2}{*}{\makecell{\textbf{Spar.}\\\textbf{(\%)}}} &\multicolumn{5}{c}{\textbf{Core}} &\multicolumn{2}{c}{\textbf{Absorption}} &\textbf{SCR} &\textbf{TPP} \\
\cmidrule(lr){4-8}\cmidrule(lr){9-10}\cmidrule(lr){11-11}\cmidrule(lr){12-12}
& & &\makecell{KL\\Score$\uparrow$} &\makecell{CE\\Score$\uparrow$} &\makecell{Expl.\\Var.$\uparrow$} &\makecell{Cos\\Sim$\uparrow$} &\makecell{MSE\\$\downarrow$} &\makecell{Abs.\\Frac.$\downarrow$} &\makecell{Full\\Abs.$\downarrow$} &\makecell{Top-10\\$\uparrow$} &\makecell{Top-10\\$\uparrow$} \\
\midrule
\multirow{4}{*}{\rotatebox[origin=c]{90}{\texttt{\shortstack{Early \\ (0-8)}}}}
 & \textit{Baseline} & \textit{0}
   & 0.9944 & 0.9941 & 0.8976 & 0.9497 & 0.4268
   & 0.0674 & 0.0354
   & 0.1714 & 0.0270 \\
\cmidrule(l){2-12}
 & \spm{Magnitude} & 50
   & 0.9847 & 0.9818 & 0.8416 & 0.9175 & 0.6536
   & 0.0764 & 0.0486
   & 0.1130 & 0.0187 \\
 & \spm{Wanda} & 50
   & \textbf{0.9935} & 0.9873 & \textbf{0.8793} & \textbf{0.9401} & \textbf{0.4648}
   & \textbf{0.0691} & \textbf{0.0353}
   & \textbf{0.1487} & 0.0188 \\
 & \spm{SparseGPT} & 50
   & 0.9918 & \textbf{0.9906} & 0.8728 & 0.9353 & 0.4886
   & 0.0717 & 0.0402
   & 0.1329 & \textbf{0.0196} \\
\midrule
\multirow{4}{*}{\rotatebox[origin=c]{90}{\texttt{\shortstack{Middle \\ (9-17)}}}}
 & \textit{Baseline} & \textit{0}
   & 0.9887 & 0.9876 & 0.8763 & 0.9301 & 2.0582
   & 0.0668 & 0.0586
   & 0.3047 & 0.0424 \\
\cmidrule(l){2-12}
 & \spm{Magnitude} & 50
   & 0.9644 & 0.9531 & 0.7782 & 0.8780 & 3.3021
   & 0.0794 & 0.0730
   & 0.1301 & 0.0089 \\
 & \spm{Wanda} & 50
   & \textbf{0.9894} & 0.9854 & \textbf{0.8507} & \textbf{0.9175} & \textbf{2.2214}
   & \textbf{0.0660} & \textbf{0.0564}
   & \textbf{0.2399} & \textbf{0.0304} \\
 & \spm{SparseGPT} & 50
   & 0.9868 & \textbf{0.9893} & 0.8277 & 0.9071 & 2.5616
   & 0.0731 & 0.0662
   & 0.2103 & 0.0258 \\
\midrule
\multirow{4}{*}{\rotatebox[origin=c]{90}{\texttt{\shortstack{Late \\ (18-25)}}}}
 & \textit{Baseline} & \textit{0}
   & 0.9776 & 0.9765 & 0.8682 & 0.9307 & 14.2891
   & 0.1208 & 0.1369
   & 0.3885 & 0.0681 \\
\cmidrule(l){2-12}
 & \spm{Magnitude} & 50
   & 0.9535 & 0.9510 & 0.7544 & 0.8701 & 29.4766
   & \textbf{0.0831} & \textbf{0.1177}
   & 0.3192 & 0.0117 \\
 & \spm{Wanda} & 50
   & \textbf{0.9780} & 0.9773 & \textbf{0.8569} & \textbf{0.9253} & \textbf{14.7422}
   & 0.1045 & 0.1179
   & \textbf{0.3743} & 0.0519 \\
 & \spm{SparseGPT} & 50
   & 0.9768 & \textbf{0.9815} & 0.8389 & 0.9150 & 17.6016
   & 0.1054 & 0.1293
   & 0.3605 & \textbf{0.0547} \\
\bottomrule
\end{tabular}
\caption{SAEBench metrics at 50\% sparsity for \texttt{gemma-2-2b}, aggregated by layer group (Early: layers 0--8; Middle: 9--17; Late: 18--25). Each group reports the unpruned dense baseline (italicised) alongside all three pruning methods. $\uparrow$ higher is better; $\downarrow$ lower is better. Refer to App.~\ref{app:perlayer-gemma2b} for per-layer SAEBench metrics from which these values are aggregated. }
\label{tab:results_by_group}
\end{table*}

\begin{figure}[t]
    \centering
    \includegraphics[width=\linewidth]{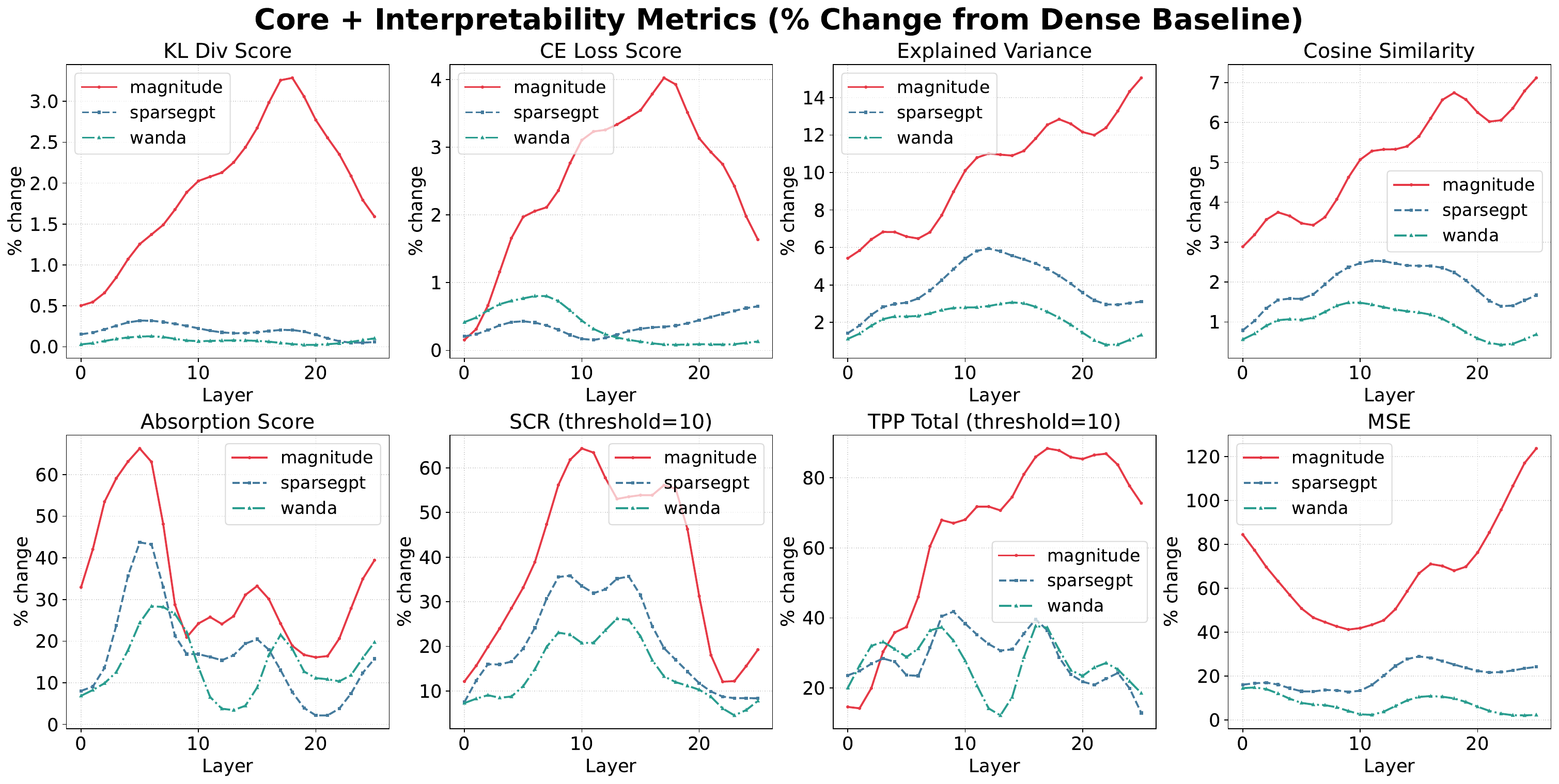}
    \caption{Per-layer percentage degradation from the dense baseline across all eight SAEBench metric categories for \texttt{gemma-2-2b} at 50\% sparsity. Each panel corresponds to one metric; \textsc{Magnitude} (red), \textsc{SparseGpt} (blue), and \textsc{Wanda} (green) across layers 0--25. These plots emphasize per-layer trends and direct cross-method comparison at each layer; see Figure \ref{fig:heatmap} for the same data as a heatmap. }
    \vspace{-1em}
    \label{fig:pct_change}
\end{figure}

\textbf{Impact of pruning methods is consistent and theory-aligned.}
Table~\ref{tab:results_by_group} and Figure~\ref{fig:pct_change} together reveal a clear and consistent ordering across all SAEBench metric categories: \textsc{magnitude} pruning causes the most severe SAE degradation, \textsc{Wanda} is substantially more robust, and \textsc{SparseGpt} is equally conservative, closely tracking the dense baseline. This ordering holds without exception across all eight metrics and all 26 layers of \texttt{gemma-2-2b} at 50\% sparsity.

The pattern precisely validates our perturbation-theoretic analysis: \textsc{magnitude} pruning minimises the wrong objective: the Frobenius norm, which equals perturbation energy only under an identity covariance. While \textsc{Wanda} and \textsc{SparseGpt} progressively better approximate the true perturbation energy $\varepsilon^2$. The separation is particularly stark on interventional metrics: \textsc{magnitude} pruning degrades SCR by up to 60\% and TPP by up to 80\% relative to the dense baseline in the worst-affected layers, whereas \textsc{Wanda} and \textsc{SparseGpt} remain within 15--25\% of baseline on the same metrics (Refer to Figure~\ref{fig:pct_change}).

Critically, the Core metrics, which aggregate over all latents, substantially understate the damage inflicted by \textsc{magnitude} pruning. For instance, KL divergence scores for \textsc{magnitude} pruning in middle layers appear superficially reasonable (${\geq}0.96$), yet the corresponding SCR and TPP scores collapse, confirming our theoretical prediction that aggregate reconstruction fidelity is insufficient to certify SAE reliability. Appendix~\ref{app:pert-energy} closes the loop directly: measured $\varepsilon^2$ values confirm the ordering of Corollary~\ref{cor:paradigm} at every layer group, and reveal that middle layers have the \emph{lowest} local $\varepsilon^2$ despite the highest degradation, implicating accumulated upstream perturbation.

\begin{figure}[t]
    \centering
    \includegraphics[width=\linewidth]{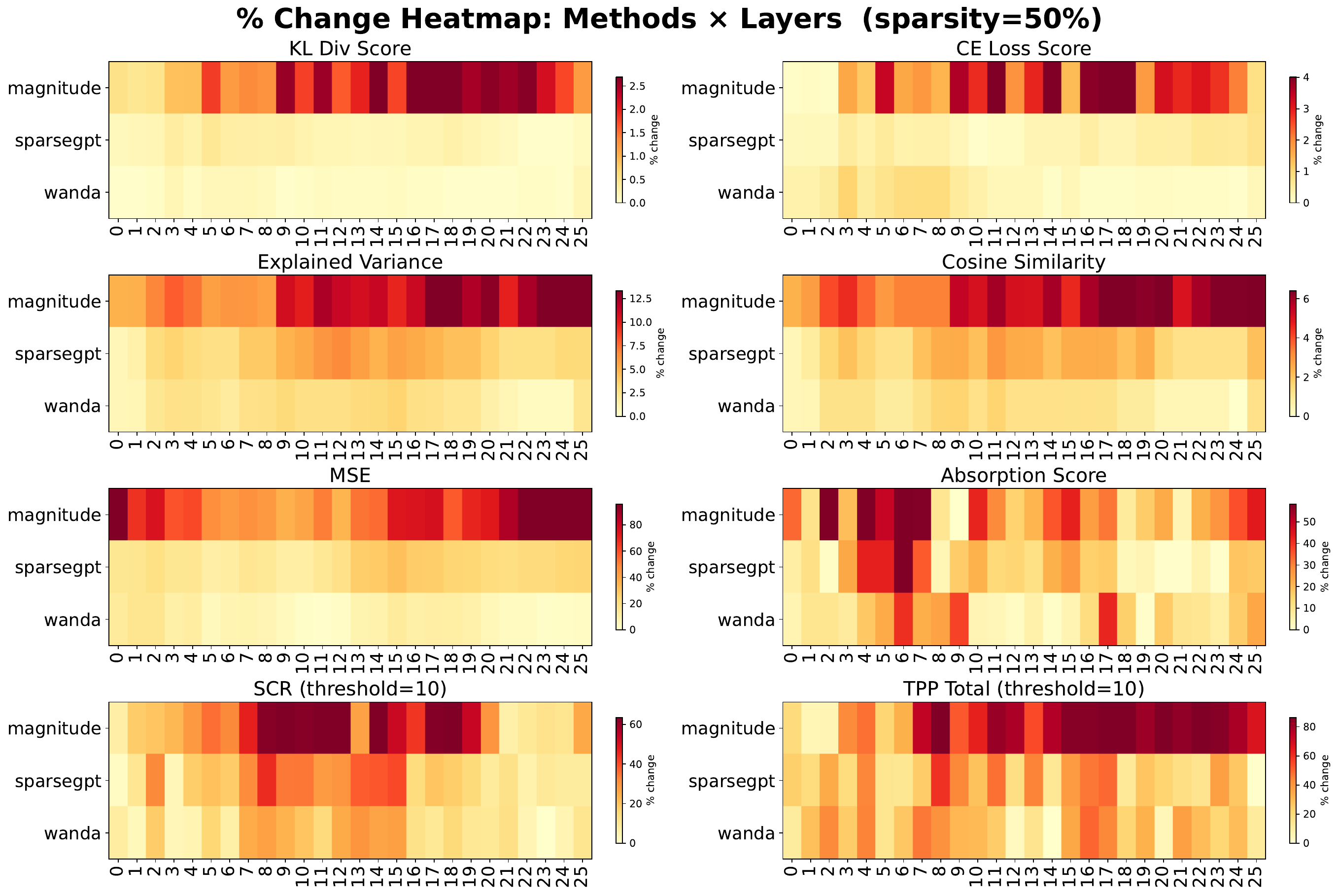}
    \caption{Heatmap of per-layer, per-method percentage degradation from the dense baseline for \texttt{gemma-2-2b} at 50\% sparsity. Each row corresponds to a pruning method; each column to a layer; colour intensity encodes the magnitude of degradation (darker $=$ larger change). This heatmap presents the same underlying data as Figure \ref{fig:pct_change}, reorganized to expose spatial patterns across the full method $\times$ layer grid simultaneously.}
    \label{fig:heatmap}
\end{figure}

\begin{figure}[t]
    \centering
    \includegraphics[width=\linewidth]{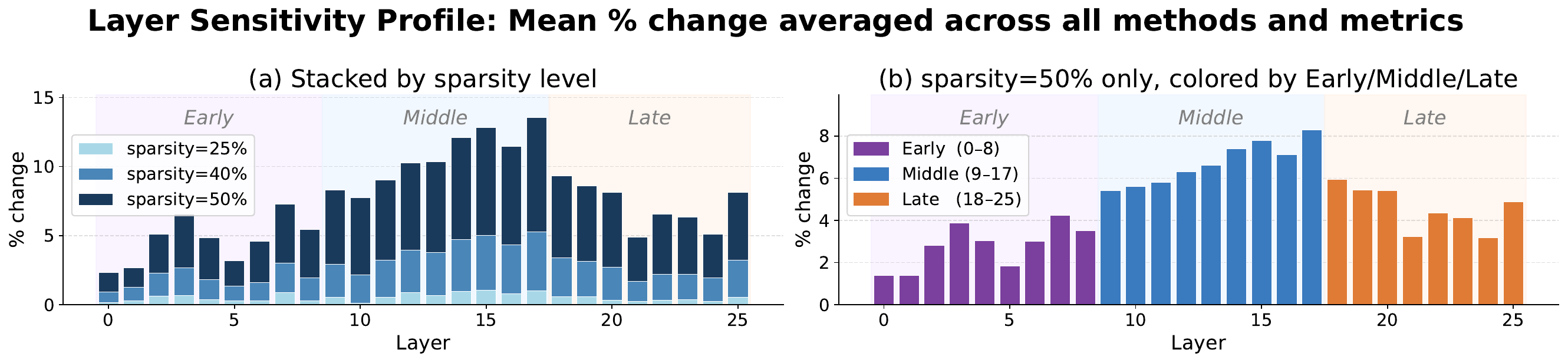}
    \caption{Layer sensitivity profile for \texttt{gemma-2-2b}, computed as the mean absolute percentage degradation averaged across all three pruning methods and all eight SAEBench metrics. \textbf{(a)} Sensitivity profiles at three sparsity levels (25\%, 40\%, 50\%). \textbf{(b)} Sensitivity profile at 50\% sparsity, coloured by layer group (Early: 0--8, Middle: 9--17, Late: 18--25).}
    \vspace{-0.5em}
    \label{fig:layer_sensitivity}
\end{figure}

\begin{table*}[ht]
\centering
\setlength{\tabcolsep}{3pt}
\small
\begin{tabular}{@{}llr ccccc cc c c@{}}
\toprule
\multirow{2}{*}{\textbf{Model}} &\multirow{2}{*}{\textbf{Method}} &\multirow{2}{*}{\makecell{\textbf{Spar.}\\\textbf{(\%)}}} &\multicolumn{5}{c}{\textbf{Core}} &\multicolumn{2}{c}{\textbf{Absorption}} &\textbf{SCR} &\textbf{TPP} \\
\cmidrule(lr){4-8}\cmidrule(lr){9-10}\cmidrule(lr){11-11}\cmidrule(lr){12-12}
& & &\makecell{KL\\Score$\uparrow$} &\makecell{CE\\Score$\uparrow$} &\makecell{Expl.\\Var.$\uparrow$} &\makecell{Cos\\Sim$\uparrow$} &\makecell{MSE\\$\downarrow$} &\makecell{Abs.\\Frac.$\downarrow$} &\makecell{Full\\Abs.$\downarrow$} &\makecell{Top-10\\$\uparrow$} &\makecell{Top-10\\$\uparrow$} \\
\midrule
\multirow{4}{*}{\rotatebox[origin=c]{90}{\texttt{\shortstack{gemma-2 \\ 2b}}}}
 & \textit{Baseline} & \textit{0}
   & 0.9873 & 0.9864 & 0.8812 & 0.9370 & 5.2568
   & 0.0836 & 0.0746
   & 0.2843 & 0.0450 \\
\cmidrule(l){2-12}
 & \spm{Magnitude} & 50
   & 0.9681 & 0.9624 & 0.7928 & 0.8893 & 10.4390
   & 0.0795 & 0.0783
   & 0.1824 & 0.0131 \\
 & \spm{Wanda} & 50
   & \textbf{0.9873} & 0.9835 & \textbf{0.8625} & \textbf{0.9277} & \textbf{5.4659}
   & \textbf{0.0789} & \textbf{0.0680}
   & \textbf{0.2497} & \textbf{0.0330} \\
 & \spm{SparseGPT} & 50
   & 0.9855 & \textbf{0.9874} & 0.8468 & 0.9193 & 6.4717
   & 0.0826 & 0.0766
   & 0.2297 & 0.0325 \\
\midrule
\multirow{4}{*}{\rotatebox[origin=c]{90}{\texttt{\shortstack{gemma-2 \\ 9b}}}}
 & \textit{Baseline} & \textit{0}
   & 0.9914 & 0.9900 & 0.8882 & 0.9380 & 7.7362
   & 0.0997 & 0.0654
   & 0.3237 & 0.0603 \\
\cmidrule(l){2-12}
 & \spm{Magnitude} & 50
   & 0.9826 & 0.9757 & 0.8506 & 0.9173 & 12.4004
   & 0.0919 & 0.0619
   & 0.2307 & 0.0298 \\
 & \spm{Wanda} & 50
   & \textbf{0.9911} & 0.9869 & \textbf{0.8772} & \textbf{0.9322} & \textbf{7.7391}
   & \textbf{0.0905} & 0.0608
   & \textbf{0.2921} & \textbf{0.0409} \\
 & \spm{SparseGPT} & 50
   & 0.9901 & \textbf{0.9892} & 0.8627 & 0.9245 & 9.6890
   & 0.0910 & \textbf{0.0606}
   & 0.2706 & 0.0395 \\
\midrule
\multirow{4}{*}{\rotatebox[origin=c]{90}{\texttt{\shortstack{pythia \\ 70m}}}}
 & \textit{Baseline} & \textit{0}
   & 0.9262 & 0.9240 & 0.9476 & 0.9572 & 0.0118
   & N/A$^\dagger$ & N/A$^\dagger$
   & 0.0726 & 0.0505 \\
\cmidrule(l){2-12}
 & \spm{Magnitude} & 50
   & 0.6422 & 0.8740 & 0.5592 & 0.8765 & 0.0615
   & N/A$^\dagger$ & N/A$^\dagger$
   & 0.0054 & \textbf{0.0420} \\
 & \spm{Wanda} & 50
   & 0.8798 & 0.9091 & 0.8705 & 0.9244 & \textbf{0.0195}
   & N/A$^\dagger$ & N/A$^\dagger$
   & 0.0150 & 0.0405 \\
 & \spm{SparseGPT} & 50
   & \textbf{0.8819} & \textbf{0.9198} & \textbf{0.9025} & \textbf{0.9328} & 0.0204
   & N/A$^\dagger$ & N/A$^\dagger$
   & \textbf{0.0321} & 0.0409 \\
\midrule
\multirow{4}{*}{\rotatebox[origin=c]{90}{\texttt{\shortstack{mistral \\ 7b}}}}
 & \textit{Baseline} & \textit{0}
   & 0.9722 & 0.9753 & 0.9844 & 0.8490 & 0.0150
   & 0.4292 & 0.4150
   & 0.3627 & 0.0501 \\
\cmidrule(l){2-12}
 & \spm{Magnitude} & 50
   & 0.9672 & \textbf{0.9758} & 0.9792 & 0.8034 & 0.0177
   & 0.4510 & 0.4371
   & 0.3634 & 0.0208 \\
 & \spm{Wanda} & 50
   & 0.9742 & 0.9720 & 0.9857 & \textbf{0.8477} & \textbf{0.0128}
   & 0.4023 & 0.3805
   & \textbf{0.3810} & \textbf{0.0463} \\
 & \spm{SparseGPT} & 50
   & \textbf{0.9749} & 0.9734 & \textbf{0.9870} & \textbf{0.8477} & 0.0131
   & \textbf{0.3848} & \textbf{0.3715}
   & 0.3617 & 0.0396 \\
\bottomrule
\end{tabular}
\vspace{6pt}
\begin{minipage}{\linewidth}
  \scriptsize
  $^\dagger$ Feature absorption evaluation requires sufficient first-letter features to be detectable in the model. SAEBench~\citep{saebench} issues a warning that this evaluation is only reliable for models $\geq$2B parameters; pythia-70m~\citep{pythia} (70M parameters) does not produce enough first-letter features for a valid absorption measurement~\citep{absorption, pythiasaes}, so these cells are excluded from all absorption analyses and averages. \\[2pt] $^\ddagger$ Pretrained SAEs for mistral-7b are available for only one layer per depth group (early, middle, late) in SAELens~\citep{saelens,mistralsaes}; mistral-7b values are therefore averaged over 3 layers. The method ordering is consistent with the full-coverage models, but cross-model magnitude comparisons involving mistral-7b should be interpreted with this coverage difference in mind.
\end{minipage}
\vspace{-1em}
\caption{Cross-architecture generalisation at 50\% sparsity. Each cell reports the metric value averaged across all evaluated layers of a given model. $\uparrow$ higher is better; $\downarrow$ lower is better.}
\label{tab:results_global_avg}
\vspace{-1.5em}
\end{table*}

\textbf{Middle layers exhibit systematic structural vulnerability.}
A striking and consistent finding across all methods and metrics is the elevated sensitivity of middle layers (9--17) to pruning, relative to both early layers (0--8) and late layers (18--25). Figure~\ref{fig:layer_sensitivity} makes this explicit: averaging the percentage degradation across all methods and metrics, the middle-layer profile rises sharply above that of early and late layers at every sparsity level tested (25\%, 40\%, 50\%), and this pattern is monotone in sparsity, confirming it is not an artifact of a particular operating point. At 50\% sparsity, the heatmap in Figure~\ref{fig:heatmap} shows that middle layers carry the darkest cells across nearly every metric--method combination, with \textsc{Magnitude} pruning on middle layers producing degradations of 10--14\% in explained variance, 5--7\% in cosine similarity, and 40--60\% in SCR and TPP, substantially exceeding the corresponding values for early and late layers.

We hypothesize this reflects the asymmetric propagation of representational damage across depth. Early layers construct foundational features that are heavily inherited by middle layers via residual connections, making middle layers the primary recipients of perturbation energy $\varepsilon^2$ introduced by pruning upstream weights. Late layers, by contrast, operate on an already fully composed representation and specialize in output-facing transformations, leaving them with little residual upstream dependency to corrupt. This asymmetry directly motivates a layer-wise sparsity schedule.


\textbf{Layer-wise sparsity allocation.}
The layer sensitivity profile identified earlier suggests that a uniform sparsity schedule is suboptimal. Motivated by this hypothesis, we experiment with redistributing the global compression budget unevenly across depth: \textit{assigning lower sparsity to early layers to preserve the activations that middle layers depend on, and higher sparsity to late layers where upstream perturbation has a negligible downstream consequence}. Concretely, given a target global sparsity $\bar{s}$, we assign layer-wise sparsity $s_\ell$ according to a schedule that ramps linearly from $s_{\min}$ at the first layer to a plateau $s_{\text{mid}}$ at the middle-layer boundary, and holds at $s_{\text{mid}}$ through the remaining layers. 



As a preliminary evaluation, we compare the perplexity of the layer-wise schedule against that of uniform sparsity at equivalent average sparsity on \texttt{gemma-2-2b}, using both \textsc{Wanda} and \textsc{SparseGpt}. Perplexity serves as a coarse measure to evaluate the model as a whole. As seen in Table \ref{tab:dynamic_pruning}, the layer-wise schedule yields lower perplexity in both cases, suggesting that the layer sensitivity profile is a practically useful signal for compression scheduling. We view this as an encouraging preliminary result rather than a fully developed method. 

Non-uniform layerwise sparsity has previously been explored for general LLM compression. OWL~\citep{yin2025outlierweighedlayerwisesparsity} allocates sparsity in proportion to each layer's activation-outlier ratio, validated against perplexity and downstream task accuracy. Our layer-wise schedule is motivated by a different, interpretability-specific diagnostic. The two criteria are not guaranteed to coincide: outlier concentration and SAE-degradation sensitivity are distinct properties of a layer's activation distribution, though a systematic comparison is orthogonal to this paper's central contribution, and we leave it to future work.

\begin{table}[t]
    \centering 
    \setlength{\tabcolsep}{3pt}
    \footnotesize
    \begin{tabular}{lccccc}
        \toprule
        & & \multicolumn{2}{c}{\textbf{Wanda}} & \multicolumn{2}{c}{\textbf{SparseGPT}} \\
        \cmidrule(lr){3-4} \cmidrule(lr){5-6}
        \textbf{Model} & \textbf{Avg. Sparsity}  & Uniform &  Layer-wise  & Uniform &  Layer-wise \\
        \midrule
        \texttt{gemma-2-2b} & 50\% & 212 & \textbf{148} & 141 & \textbf{98} \\
        \bottomrule
    \end{tabular}
     \caption{Perplexity on \texttt{gemma-2-2b} under uniform versus layer-wise sparsity allocation evaluated for \textsc{Wanda} and \textsc{SparseGpt}. The ramp schedule assigns 20\% sparsity to early layers, increasing linearly to 60\% at the middle-layer and holding at 60\%  until the last layer.}\label{tab:dynamic_pruning}
    \vspace{-1em}
\end{table}

\textbf{Cross-model generalisation confirms the theoretical ordering.}
Table~\ref{tab:results_global_avg} reports results aggregated over all layers for four model architectures spanning roughly two orders of magnitude in parameter count: \texttt{pythia-70m}, \texttt{gemma-2-2b}, \texttt{gemma-2-9b}, and \texttt{mistral-7b}. Across all models for which valid SAE evaluations are available, the method ordering $\textsc{SparseGpt} \succ \textsc{Wanda} \succ \textsc{Magnitude}$ is preserved on Core metrics, with \textsc{Wanda} and \textsc{SparseGpt} achieving near-baseline KL and CE scores while \textsc{magnitude} pruning causes marked degradation. The result is especially pronounced for \texttt{pythia-70m}, where \textsc{magnitude} pruning reduces the KL divergence score from $0.926$ to $0.642$, a collapse not observed under \textsc{Wanda} ($0.880$) or \textsc{SparseGpt} ($0.882$). \texttt{Mistral-7b} also displays comparatively mild absolute degradation across all three methods on Core metrics.

\section{Conclusion} \label{sec:conclusion}
This paper studies the interaction between weight pruning and mechanistic interpretability via sparse autoencoders (SAE), showing that, for a fixed SAE, pruning-induced SAE degradation is governed by perturbation energy, a covariance-weighted norm that measures how strongly pruning distorts the activation distribution, modulated by the SAE's Lipschitz constant. This framework explains the consistent empirical ordering of pruning methods: \textsc{Magnitude} pruning, which ignores activation geometry, induces large perturbation energy and significantly degrades SAE behavior, while activation-aware methods such as \textsc{Wanda} and \textsc{SparseGpt} better preserve interpretability by controlling perturbation energy. We validate this theory through a comprehensive evaluation across \texttt{gemma-2-2b}, \texttt{gemma-2-9b}, \texttt{mistral-7b}, and \texttt{pythia-70m}, using pretrained SAEs and the SAEBench evaluation suite. Additionally, we identify a structural vulnerability in middle layers and show that layer-wise sparsity allocation, protecting these layers better, preserves representations under equivalent compression.

\textbf{Limitations and Future Work.}  \label{sec:limitations}
Our study has several limitations that point toward promising directions for future work.
\textbf{First}, we hold SAE weights fixed after pruning. An important complementary direction is to retrain SAEs from scratch on the pruned model's activations and examine whether the recovered features remain consistent with those of the dense model. \textbf{Second}, our conclusions rest on four model--SAE pairs drawn from a relatively homogeneous set of architectures. It remains unclear whether the results generalize to other model families, instruction-tuned models, mixture-of-experts architectures, or SAEs trained on MLP and attention-head outputs rather than the residual stream. \textbf{Third}, our analysis is confined to unstructured pruning; developing analogous theory for structured pruning, quantization, and knowledge distillation is important as these methods grow more prevalent in production deployments. \textbf{Fourth}, our layer-wise sparsity allocation is a preliminary illustration that the sensitivity profile is a useful scheduling signal, not a proposed allocation method in its own right: we do not benchmark it against more non-intuitive methods or dedicated layerwise-allocation methods such as OWL~\cite{yin2025outlierweighedlayerwisesparsity}, since a rigorous comparison is a separate undertaking from the SAE-degradation analysis that is this paper's focus. \textbf{Finally}, while the middle-layer sensitivity finding is empirically robust, our framework does not yet explain it theoretically.  A formal account of how the structure of internal representations evolves with depth, and how this motivates principled rather than heuristic sparsity allocation, remains an open problem.

\section*{Acknowledgments} This work was funded in part by the National Science Foundation under award number IIS2202699, IIS-2416895, IIS-2301599, CMMI2301601, DMS-2529302.
We used large language models (Claude by Anthropic, ChatGPT by OpenAI) to assist with writing and paraphrasing portions of this paper. No LLMs were used to originate research ideas, generate data or plots, or perform evaluation.

\bibliography{colm2026_conference}
\bibliographystyle{colm2026_conference}

\appendix
\section{Appendix} \label{app}
 
\subsection{Detailed Proofs for the Perturbation-Theoretic Framework}
\label{app:detailed_proofs}
 
This appendix provides fully detailed, self-contained proofs of the
perturbation theory presented in Section \ref{sec: theory}.
 
\subsubsection{Setup and Notation}
 
We recall the objects involved.
 
\begin{itemize}
  \item $W \in \R^{d \times p}$: the weight matrix of a single linear
        layer.
  \item $\xin \in \R^p$: the random input to that layer, drawn from
        the data distribution $\mathcal{D}$.
  \item $x = W\xin \in \R^d$: the original (pre-pruning) activation.
  \item $\Dw \in \R^{d \times p}$: the weight \emph{change} induced by
        pruning. Because pruning zeroes entries, every non-zero entry
        $(i,j)$ of $\Dw$ satisfies $(\Dw)_{ij} = -W_{ij}$.
  \item $x' = (W+\Dw)\xin = x + \delta$: the pruned activation, where
        $\delta := \Dw\,\xin$ is the \textbf{perturbation vector}.
  \item $f_\theta : \R^d \to \R^d$: the trained SAE, which is
        $L$-Lipschitz (Assumption~\ref{asm:lip}).
  \item $\Sigma_{\xin} := \E[\xin \xin^\top] \in \R^{p \times p}$: the
        second-moment (input covariance) matrix.
\end{itemize}
 
We want to bound the \textbf{expected squared
reconstruction error on pruned activations}:
\begin{equation}
  \mathcal{E}_{\mathrm{pruned}}
  \;:=\;
  \E \left[\|x' - f_\theta(x')\|^2\right].
\end{equation}
We will show that this is bounded in terms of the
\textbf{perturbation energy}
\begin{equation}
  \varepsilon^2 \;:=\; \E \left[\|\delta\|^2\right],
\end{equation}
and then show that $\varepsilon^2 = \tr(\Dw\,\Sigma_{\xin}\,\Dw^\top)$.
 
\subsubsection{Proof of Lemma~\ref{lem:recon}}
 
\begin{lemma*}[Lemma~\ref{lem:recon}, restated]
Let $f_\theta$ be $L$-Lipschitz and let $x' = x + \delta$.  Then:
\begin{equation}\label{eq:lemma_restated}
  \|x' - f_\theta(x')\| \;\leq\; \|x - f_\theta(x)\| + (1+L)\|\delta\|.
\end{equation}
\end{lemma*}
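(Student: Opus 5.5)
The plan is to write the reconstruction residual on the pruned activation as the original residual plus two correction terms, and then bound those corrections with the triangle inequality and Assumption~\ref{asm:lip}. Concretely, since $x' = x + \delta$, I will insert and subtract $x$ and $f_\theta(x)$:
\begin{equation*}
  x' - f_\theta(x') \;=\; \bigl(x - f_\theta(x)\bigr) \;+\; (x' - x) \;-\; \bigl(f_\theta(x') - f_\theta(x)\bigr).
\end{equation*}
This is a purely algebraic identity; the $x$ and $f_\theta(x)$ terms cancel pairwise.

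Next I will take norms and apply the triangle inequality to the three summands. This gives
\begin{equation*}
  \|x' - f_\theta(x')\| \;\leq\; \|x - f_\theta(x)\| + \|x' - x\| + \|f_\theta(x') - f_\theta(x)\|.
\end{equation*}
The middle term is exactly $\|\delta\|$. For the last term I will invoke Assumption~\ref{asm:lip} with the pair $(x, x')$, which gives $\|f_\theta(x') - f_\theta(x)\| \leq L\|x'-x\| = L\|\delta\|$. Adding the two correction bounds yields $(1+L)\|\delta\|$, which is the claimed inequality.

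I do not expect a real obstacle here. The only point needing care is that the Lipschitz bound must hold globally, i.e.\ at the pruned activation $x'$, which may lie outside the SAE's training distribution. Assumption~\ref{asm:lip} is stated for all $x, x' \in \R^d$, so this is covered. The lemma is deterministic and holds pointwise for every sample, so nothing probabilistic enters at this stage. Squaring and taking expectations is deferred to the proof of Thm.~\ref{thm:main}.
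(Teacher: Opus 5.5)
Your proof is correct and is essentially the paper's argument. The paper applies the two-term triangle inequality twice: first it inserts $f_\theta(x)$, then it splits $x'-f_\theta(x)=(x-f_\theta(x))+\delta$, and finally it uses $\|f_\theta(x)-f_\theta(x')\|\le L\|\delta\|$, which is your three-term decomposition done in two steps. Your observation that the global form of Assumption~\ref{asm:lip} is what allows the Lipschitz bound at the possibly off-distribution point $x'$ is a point the paper leaves implicit.
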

 
\begin{proof}
We want to bound the reconstruction error at $x'$.
We will relate it to the reconstruction error at $x$, which we
know is small, and the perturbation $\delta$, which we want to bound.


\textbf{Applying the triangle inequality}: The triangle inequality states that for any three vectors $a, b, c$:
$\|a - c\| \leq \|a - b\| + \|b - c\|$.
We use it with $a = x'$, $c = f_\theta(x')$, and $b$ chosen to introduce $f_\theta(x)$:
\begin{align}
  \|x' - f_\theta(x')\|
  &= \|(x' - f_\theta(x)) + (f_\theta(x) - f_\theta(x'))\|
     \notag \\
  &\leq \|x' - f_\theta(x)\| + \|f_\theta(x) - f_\theta(x')\|.
     \label{eq:tri1}
\end{align}
We now bound each of these two terms separately.
 
 
\textbf{Bounding the first term}: Since $x' = x + \delta$, we apply the triangle inequality again:
\begin{align}
  \|x' - f_\theta(x)\|
  &= \|(x + \delta) - f_\theta(x)\| \notag \\
  &= \|(x - f_\theta(x)) + \delta\| \notag \\
  &\leq \|x - f_\theta(x)\| + \|\delta\|. \label{eq:tri2}
\end{align}
 
 
\textbf{Bounding the second term}: We use the Lipschitz continuity assumption here. By Assumption~\ref{asm:lip}, the SAE satisfies
$\|f_\theta(u) - f_\theta(v)\| \leq L\|u - v\|$ for all $u, v$.
Applying this with $u = x$ and $v = x' = x + \delta$:
\begin{equation}\label{eq:lip_applied}
  \|f_\theta(x) - f_\theta(x')\|
  \;\leq\; L\|x - x'\|
  \;=\; L\|(x) - (x+\delta)\|
  \;=\; L\|\delta\|.
\end{equation}
 
Substituting \eqref{eq:tri2} and \eqref{eq:lip_applied} into
\eqref{eq:tri1}:
\begin{align}
  \|x' - f_\theta(x')\|
  &\leq \|x' - f_\theta(x)\| + \|f_\theta(x) - f_\theta(x')\|
     \notag \\
  &\leq \bigl(\|x - f_\theta(x)\| + \|\delta\|\bigr) + L\|\delta\|
     \notag \\
  &= \|x - f_\theta(x)\| + (1 + L)\|\delta\|. \qedhere
\end{align}
\end{proof}
 
\begin{remark*}[Interpretation of $(1+L)$]
The factor $(1+L)$ has a precise meaning.
The $+1$ accounts for the direct shift of the activation: moving from
$x$ to $x' = x+\delta$ shifts the reconstruction target by $\|\delta\|$.
The $+L$ accounts for the SAE's \emph{response} to that shift: because
the SAE is Lipschitz with constant $L$, its output can move by at most
$L\|\delta\|$ when the input moves by $\|\delta\|$.
Both movements contribute additively to the reconstruction error.
\end{remark*}
 
\textbf{Proof of Theorem~\ref{thm:main} (Expected Bound and
Covariance Decomposition)}
 
\begin{theorem*}[Theorem~\ref{thm:main}, restated]
Under Assumption~\ref{asm:lip}:
\begin{equation}\label{eq:thm_restated}
  \E \left[\|x'-f_\theta(x')\|^2\right]
  \;\leq\;
  \E \left[\|x-f_\theta(x)\|^2\right]
  + (1+L)^2\,\varepsilon^2
  + 2(1+L)\sqrt{\E[\|x-f_\theta(x)\|^2]\cdot\varepsilon^2},
\end{equation}
where
\begin{equation}\label{eq:covariance_id}
  \varepsilon^2
  \;:=\; \E \left[\|\delta\|^2\right]
  \;=\; \tr \bigl(\Dw\,\Sigma_{\xin}\,\Dw^\top\bigr).
\end{equation}
\end{theorem*}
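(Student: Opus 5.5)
The plan is to start from Lemma~\ref{lem:recon}, square it, take expectations, and treat the two sides of the theorem separately: first the inequality, then the covariance identity for $\varepsilon^2$. Write $r := \|x - f_\theta(x)\|$ and $\rho := \|\delta\|$, both nonnegative random variables. Lemma~\ref{lem:recon} gives $\|x'-f_\theta(x')\| \le r + (1+L)\rho$ pointwise. Since both sides are nonnegative, squaring preserves the inequality:
\begin{equation*}
  \|x'-f_\theta(x')\|^2 \le r^2 + (1+L)^2\rho^2 + 2(1+L)\, r\rho .
\end{equation*}
Taking expectations and using linearity gives $\E[r^2] + (1+L)^2\E[\rho^2] + 2(1+L)\E[r\rho]$. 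Here $\E[\rho^2]=\varepsilon^2$ by definition.

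The only nontrivial step in the inequality is the cross term. Cauchy--Schwarz in $L^2(\mathcal{D})$ gives
\begin{equation*}
  \E[r\rho] \le \sqrt{\E[r^2]\,\E[\rho^2]} = \sqrt{\E[\|x-f_\theta(x)\|^2]\cdot\varepsilon^2}.
\end{equation*}
Substituting this yields \eqref{eq:thm_restated}. We should state the implicit integrability assumption that $\E[\|x-f_\theta(x)\|^2]$ and $\E[\|\xin\|^2]$ are finite. Finiteness of $\E[\|\xin\|^2]$ makes $\varepsilon^2$ finite, because $\|\delta\|\le\|\Dw\|_{\mathrm{op}}\|\xin\|$. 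If either quantity is infinite, the bound holds trivially.

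For the covariance identity \eqref{eq:covariance_id}, write $\|\delta\|^2 = \delta^\top\delta = \tr(\delta\delta^\top) = \tr(\Dw\,\xin\xin^\top\Dw^\top)$, using the cyclic property of the trace. Trace and multiplication by the fixed matrices $\Dw$ and $\Dw^\top$ are linear, so expectation commutes with them:
\begin{equation*}
  \E\|\delta\|^2 = \tr\bigl(\Dw\,\E[\xin\xin^\top]\,\Dw^\top\bigr) = \tr\bigl(\Dw\,\Sigma_{\xin}\,\Dw^\top\bigr).
\end{equation*}
Exchanging expectation and trace is justified by the same second-moment finiteness. This step needs $\Dw$ to be deterministic, meaning the pruning mask is fixed independently of the sample $\xin$. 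That holds because the calibration set is separate from the evaluation samples, and the proof should note it. The eigen-expansion \eqref{eq:eps_expand} then follows by substituting $\Sigma_{\xin}=\sum_i\lambda_i v_iv_i^\top$ and using linearity once more.

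The main obstacle is not computational. It is making explicit the hypotheses the statement uses silently: squaring is valid because both sides are nonnegative, the moments are finite, and $\Dw$ is independent of $\xin$. I would list these assumptions at the start of the proof and then carry out the three steps above.
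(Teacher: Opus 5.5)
Your proposal is correct and follows the paper's proof essentially step for step: square Lemma~\ref{lem:recon} (valid because both sides are nonnegative), take expectations, bound the cross term by Cauchy--Schwarz, and get the covariance identity by passing the expectation through the trace and the fixed matrices $\Dw$, $\Dw^\top$. The one cosmetic difference is that the paper writes $\|\delta\|^2 = \xin^\top \Dw^\top\Dw\,\xin$ and uses $\E[v^\top M v] = \tr(M\,\E[vv^\top])$ before applying cyclicity, whereas you trace the outer product $\delta\delta^\top$ directly; your explicit hypotheses (finite second moments, $\Dw$ fixed independently of $\xin$) are used tacitly by the paper, which for instance applies that identity to a ``deterministic matrix $M$''.
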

 
\begin{proof}
We prove the two parts separately.
 
\bigskip
\noindent\textbf{Part 1: Proving the bound \eqref{eq:thm_restated}.}
 
From Lemma~\ref{lem:recon}, we have for every sample:
\begin{equation}
  \|x' - f_\theta(x')\|
  \;\leq\; \|x - f_\theta(x)\| + (1+L)\|\delta\|.
\end{equation}
Let us write $A := \|x - f_\theta(x)\|$ and $B := (1+L)\|\delta\|$
to keep the algebra clean.
The bound says $\|x' - f_\theta(x')\| \leq A + B$, so since squaring
is monotone on non-negative numbers:
\begin{equation}
  \|x' - f_\theta(x')\|^2
  \;\leq\; (A + B)^2
  \;=\; A^2 + 2AB + B^2.
\end{equation}
Expanding back:
\begin{equation}\label{eq:squared_bound}
  \|x' - f_\theta(x')\|^2
  \;\leq\;
  \|x - f_\theta(x)\|^2
  + 2(1+L)\|x - f_\theta(x)\|\cdot\|\delta\|
  + (1+L)^2\|\delta\|^2.
\end{equation}
 
 
$\E[\,\cdot\,]$ is linear, so taking expectations on both sides of
\eqref{eq:squared_bound}:
\begin{align}
  \E \left[\|x' - f_\theta(x')\|^2\right]
  &\leq
  \E \left[\|x - f_\theta(x)\|^2\right]
  + (1+L)^2\E \left[\|\delta\|^2\right] \notag \\
  &\quad + 2(1+L)\,\E \left[\|x - f_\theta(x)\|\cdot\|\delta\|\right].
  \label{eq:after_expectation}
\end{align}
 
 
The cross term $\E[\|x - f_\theta(x)\|\cdot\|\delta\|]$ involves the
expectation of a \emph{product} of two random variables.
We bound it using the Cauchy--Schwarz inequality for expectations,
which states that for any two random variables $U, V$:
\begin{equation}
  \E[UV] \;\leq\; \sqrt{\E[U^2]\,\E[V^2]}.
\end{equation}
Setting $U = \|x - f_\theta(x)\|$ and $V = \|\delta\|$:
\begin{equation}\label{eq:cs}
  \E \left[\|x - f_\theta(x)\|\cdot\|\delta\|\right]
  \;\leq\;
  \sqrt{\E \left[\|x - f_\theta(x)\|^2\right] \cdot \E \left[\|\delta\|^2\right]}.
\end{equation}
 
 
Substituting \eqref{eq:cs} into \eqref{eq:after_expectation} and
writing $\varepsilon^2 := \E[\|\delta\|^2]$:
\begin{equation}
  \E \left[\|x' - f_\theta(x')\|^2\right]
  \;\leq\;
  \E \left[\|x - f_\theta(x)\|^2\right]
  + (1+L)^2\,\varepsilon^2
  + 2(1+L)\sqrt{\E \left[\|x - f_\theta(x)\|^2\right]\cdot\varepsilon^2}.
\end{equation}
This is precisely \eqref{eq:thm_restated}. \hfill$\square_{\text{Part 1}}$
 
\bigskip
\noindent\textbf{Part 2: Proving the covariance identity
\eqref{eq:covariance_id}.}
 
We show that $\E[\|\delta\|^2] = \tr(\Dw\,\Sigma_{\xin}\,\Dw^\top)$.
 
 
Recall $\delta = \Dw\,\xin$.
The squared norm of a vector $v$ can be written as the dot product
$\|v\|^2 = v^\top v$.
Therefore:
\begin{equation}
  \|\delta\|^2
  \;=\; \delta^\top\delta
  \;=\; (\Dw\,\xin)^\top(\Dw\,\xin)
  \;=\; \xin^\top \Dw^\top \Dw\,\xin.
\end{equation}
 
 
For any deterministic matrix $M \in \R^{p \times p}$ and any random
vector $v \in \R^p$, following is a fundamental identity:
\begin{equation}\label{eq:trace_trick}
  \E \left[v^\top M v\right] \;=\; \tr \bigl(M\,\E[vv^\top]\bigr).
\end{equation}
\emph{Why does this identity hold?}
Writing out the quadratic form entry-by-entry:
$v^\top M v = \sum_{i,j} M_{ij} v_i v_j$.
Taking expectations and using linearity:
$\E[v^\top M v] = \sum_{i,j} M_{ij}\,\E[v_i v_j]$.
But $\E[v_i v_j] = [\E[vv^\top]]_{ij}$, so:
\begin{equation}
  \E[v^\top M v]
  \;=\; \sum_{i,j} M_{ij}\,[\E[vv^\top]]_{ij}
  \;=\; \sum_i [M\,\E[vv^\top]]_{ii}
  \;=\; \tr  \bigl(M\,\E[vv^\top]\bigr),
\end{equation}
where the second equality uses the definition of matrix
multiplication, and the third uses the definition of the trace of a matrix.
 
 
Applying \eqref{eq:trace_trick} with $M = \Dw^\top\Dw$ and
$v = \xin$:
\begin{equation}
  \E \left[\|\delta\|^2\right]
  \;=\; \E \left[\xin^\top \Dw^\top\Dw\,\xin\right]
  \;=\; \tr\bigl(\Dw^\top\Dw\,\E[\xin\xin^\top]\bigr)
  \;=\; \tr \bigl(\Dw^\top\Dw\,\Sigma_{\xin}\bigr).
\end{equation}
 
 
The trace satisfies $\tr(AB) = \tr(BA)$ for any matrices $A, B$ of
compatible dimensions (according to the \emph{cyclic property}).
Applying this with $A = \Dw^\top$ and $B = \Dw\,\Sigma_{\xin}$:
\begin{equation}
  \tr \bigl(\Dw^\top\Dw\,\Sigma_{\xin}\bigr)
  \;=\; \tr \bigl(\Dw\,\Sigma_{\xin}\,\Dw^\top\bigr).
\end{equation}
 
 
Chaining the equalities :
\begin{equation}
  \varepsilon^2
  \;=\; \E \left[\|\delta\|^2\right]
  \;=\; \E \left[\|\Dw\,\xin\|^2\right]
  \;=\; \tr \bigl(\Dw\,\Sigma_{\xin}\,\Dw^\top\bigr). \qedhere
\end{equation}
\end{proof}
 
 
\begin{remark*}[Eigenvalue expansion of $\varepsilon^2$]
\label{rem:eigenexpand}
Writing the eigendecomposition $\Sigma_{\xin} = \sum_{i=1}^p
\lambda_i v_i v_i^\top$ with $\lambda_1 \geq \cdots \geq \lambda_p
> 0$, the cyclic property gives:
\begin{equation}\label{eq:eig_expand}
  \varepsilon^2
  \;=\; \tr(\Dw\,\Sigma_{\xin}\,\Dw^\top)
  \;=\; \sum_{i=1}^p \lambda_i \,\|\Dw\, v_i\|^2.
\end{equation}
Each term $\lambda_i \|\Dw\, v_i\|^2$ is the product of
(i) the variance of the data in direction $v_i$, and
(ii) the squared magnitude of the activation perturbation in that
direction.
Equation~\eqref{eq:eig_expand} makes the covariance-weighting
explicit: a pruning decision that perturbs a high-variance direction
(large $\lambda_i$) is penalised proportionally more in $\varepsilon^2$,
and therefore causes proportionally more damage to the SAE.
\end{remark*}

\subsubsection{Proof of Corollary~\ref{cor:paradigm} (Paradigm
Comparison)}
 
\begin{corollary*}[Corollary~\ref{cor:paradigm}, restated]
At any target sparsity $s$, the three pruning methods relate to
$\varepsilon^2$ as follows:
\begin{enumerate}
  \item \textsc{magnitude} pruning minimises
        $\|\Dw\|_F^2 = \tr(\Dw\, I\, \Dw^\top)$.
  \item \scr{Wanda} minimises
        $\tr\!\bigl(\Dw\,\diag(\E[\xin\xin^\top])\,\Dw^\top\bigr)$.
  \item SparseGPT approximately minimises $\varepsilon^2 =
        \tr(\Dw\,\Sigma_{\xin}\,\Dw^\top)$ directly.
\end{enumerate}
\end{corollary*}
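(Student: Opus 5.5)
The plan is to verify each of the three items separately, by writing each method's selection criterion from Definition~\ref{def:pruning} as a trace of the form $\tr(\Dw\, M\, \Dw^\top)$ for a suitable positive semidefinite $M$. Then we compare $M$ with $\Sigma_{\xin}$, using the covariance identity $\varepsilon^2=\tr(\Dw\,\Sigma_{\xin}\,\Dw^\top)$ from Theorem~\ref{thm:main}. Throughout we use the structural fact recorded in the setup: $\Dw$ is supported on the pruning mask $S$, and $(\Dw)_{ij}=-w_{ij}$ for $(i,j)\in S$.

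For \textsc{magnitude}, we expand $\tr(\Dw\,I\,\Dw^\top)=\sum_{i,j}(\Dw)_{ij}^2=\sum_{(i,j)\in S}w_{ij}^2=\|\Dw\|_F^2$, which is exactly the objective in Definition~\ref{def:pruning}. For the ``only when'' remark, we observe that if $\Sigma_{\xin}=cI$, then $\varepsilon^2=c\|\Dw\|_F^2$, so the two minimizers coincide. Conversely, if $\Sigma_{\xin}$ is not a multiple of the identity, we exhibit a single row and two candidate entries $j,k$ with equal $|w_{ij}|$ but $\E[(\xin)_j^2]\neq\E[(\xin)_k^2]$. Both choices tie under magnitude pruning but give different $\varepsilon^2$. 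If all diagonal entries are equal, a nonzero off-diagonal entry instead breaks the tie through the cross term. In either case magnitude pruning need not minimize $\varepsilon^2$.

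For \textsc{Wanda}, we compute $\tr(\Dw\,D\,\Dw^\top)$ with $D=\diag(\E[\xin\xin^\top])$. Because $D$ is diagonal, this equals $\sum_{i,j}(\Dw)_{ij}^2\,\E[(\xin)_j^2]=\sum_{(i,j)\in S}w_{ij}^2\,\E[(\xin)_j^2]$, which is the Wanda objective. We then note that $\varepsilon^2$ equals this quantity plus the off-diagonal terms $\sum_i\sum_{j\neq k}(\Dw)_{ij}(\Dw)_{ik}(\Sigma_{\xin})_{jk}$. This justifies calling it a diagonal approximation, with equality exactly when the input coordinates are uncorrelated in second moment. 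Wanda's per-output-row comparison groups only decide how the mask $S$ is constrained, not the form of the objective, so they do not affect the identity.

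For SparseGPT, we take its stated layer-wise objective $\E[\|\Dw\,\xin\|^2]$ and apply Part~2 of Theorem~\ref{thm:main} directly to rewrite it as $\tr(\Dw\,\Sigma_{\xin}\,\Dw^\top)=\varepsilon^2$. The qualifier ``approximately'' is inherited from the method itself: a greedy OBS-style mask selection with weight reconstruction and an empirical calibration estimate of $\Sigma_{\xin}$. We do not prove optimality. The one subtlety is that SparseGPT also updates the surviving weights, so $\Dw$ is no longer just $-w_{ij}$ on $S$. The trace identity, however, holds for arbitrary $\Dw$, so this step goes through unchanged. The main obstacle is the ``only when'' direction in item~1. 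It is an informal claim about minimizers rather than objectives, so we plan to state it as the existence of counterexamples rather than a strict characterization.
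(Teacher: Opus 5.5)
Your proposal is correct and follows essentially the same route as the paper: you rewrite each method's objective from Definition~\ref{def:pruning} as $\tr(\Dw\,M\,\Dw^\top)$ with $M=I$, $M=\diag(\Sigma_{\xin})$, or $M=\Sigma_{\xin}$, and compare it against the covariance identity of Theorem~\ref{thm:main}. Your additions go slightly beyond the paper's proof, which treats these points informally: you read the ``only when'' clause as an existence-of-counterexamples statement, you write out the off-diagonal remainder for \textsc{Wanda}, and you observe that SparseGPT's weight updates break $(\Dw)_{ij}=-w_{ij}$ while leaving the trace identity intact.
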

 
\begin{proof}
Each claim follows directly from the definition of the method
(Definition~\ref{def:pruning}) combined with the identity
$\varepsilon^2 = \tr(\Dw\,\Sigma_{\xin}\,\Dw^\top)$ from
Theorem~\ref{thm:main}.
 
\medskip
\noindent\textbf{Claim 1 (\textsc{magnitude} pruning).}
 
\textsc{magnitude} pruning selects the pruning mask to minimise
$\sum_{(i,j)\in S} w_{ij}^2 = \|\Dw\|_F^2$ (since $(\Dw)_{ij} = -w_{ij}$
for pruned entries).
By the covariance identity, $\|\Dw\|_F^2 = \tr(\Dw\,I\,\Dw^\top)$,
which equals $\tr(\Dw\,\Sigma_{\xin}\,\Dw^\top)$ only when
$\Sigma_{\xin} = I$.
When $\Sigma_{\xin} \neq I$ (the typical case), \textsc{magnitude} pruning
minimises the wrong objective.
 
\medskip
\noindent\textbf{Claim 2 (\scr{Wanda}).}
 
\scr{Wanda} scores each entry $(i,j)$ by $w_{ij}^2 \cdot \E[(x_{\mathrm{in}})_j^2]$
and prunes the lowest-scoring entries.
The summed score of all pruned entries is
$\sum_{(i,j)\in S} w_{ij}^2 \cdot \E[(x_{\mathrm{in}})_j^2]
= \tr\!\bigl(\Dw\,D\,\Dw^\top\bigr)$,
where $D = \diag(\E[(x_{\mathrm{in}})^2])$ is the diagonal matrix of
per-coordinate input variances.
Since $D_{jj} = \E[(x_{\mathrm{in}})_j^2]$, and noting that
$[\Sigma_{\xin}]_{jj} = \E[(x_{\mathrm{in}})_j^2]$,
the \scr{Wanda} objective is the \emph{diagonal approximation} to
$\tr(\Dw\,\Sigma_{\xin}\,\Dw^\top)$, obtained by discarding the
off-diagonal (cross-covariance) terms of $\Sigma_{\xin}$.
 
\medskip
\noindent\textbf{Claim 3 (SparseGPT).}
 
SparseGPT solves the layer-wise reconstruction problem
$\min_{\Dw : \mathrm{sparsity}(W+\Dw)=s} \E[\|\Dw\,\xin\|^2]$
via a second-order approximation using the empirical Hessian
$\hat{\Sigma}_{\xin} = \frac{1}{N}\sum_{n=1}^N \xin^{(n)}(\xin^{(n)})^\top$.
By Theorem~\ref{thm:main}, $\E[\|\Dw\,\xin\|^2] = \tr(\Dw\,\Sigma_{\xin}\,\Dw^\top)$,
so SparseGPT directly (approximately) minimises $\varepsilon^2$.
\end{proof}
 
\begin{remark}[Hierarchy of approximations]
The three methods form a natural hierarchy in how faithfully they
approximate the true objective $\varepsilon^2$:
\begin{equation}
  \underbrace{\tr(\Dw\, I\, \Dw^\top)}_{\text{\textsc{magnitude}}}
  \;\prec\;
  \underbrace{\tr(\Dw\, \diag(\Sigma_{\xin})\, \Dw^\top)}_{\text{\scr{Wanda}}}
  \;\prec\;
  \underbrace{\tr(\Dw\, \Sigma_{\xin}\, \Dw^\top)}_{\text{SparseGPT}},
\end{equation}
where $\prec$ denotes ``is a coarser approximation of the true
perturbation energy than''.
\textsc{magnitude} pruning ignores $\Sigma_{\xin}$ entirely; \scr{Wanda} uses only its
diagonal; SparseGPT uses the full matrix.
\end{remark}

\subsection{SAEBench Evaluation Suite} \label{app:metrics}
This appendix gives a detailed overview of the SAEBench evaluation suite~\citep{saebench}. We also provide intuitive understanding of each metric category~\citep{absorption, scrtpp, saebench}. Ultimately, we conclude the appendix by talking about the rationale behind the SCR and TPP threshold choices.

\subsubsection{Core metrics} \label{app:metrics:core}
\paragraph{KL divergence score.}
Let $\mathcal{D}_{\text{orig}}$ denote the output distribution of the original model 
and $\mathcal{D}_{\text{SAE}}$ the output distribution when the residual stream is 
replaced by the SAE reconstruction. The zero-ablation distribution 
$\mathcal{D}_{\text{zero}}$ is obtained by replacing the residual stream with zeros. 
The KL divergence score is defined as:
\begin{equation}
    \text{KL score} = 1 - 
    \frac{D_{\mathrm{KL}}(\mathcal{D}_{\text{orig}} \,\|\, \mathcal{D}_{\text{SAE}})}
         {D_{\mathrm{KL}}(\mathcal{D}_{\text{orig}} \,\|\, \mathcal{D}_{\text{zero}})},
\end{equation}
normalized so that a perfect reconstruction yields a score of 1 and a zero-ablation 
yields a score of 0. This metric is sensitive to distributional shifts in the model's 
predictions, not just pointwise reconstruction error.

\paragraph{CE loss score.}
Let $H_{\text{orig}}$, $H^{*}$, and $H_{\text{ablation}}$ denote the cross-entropy 
loss of the model under the original activations, SAE-reconstructed activations, and 
zero-ablated activations, respectively. The CE loss score is defined as:
\begin{equation}
    \text{CE score} = 
    \frac{H_{\text{ablation}} - H^{*}}{H_{\text{ablation}} - H_{\text{orig}}}.
\end{equation}
A score of 1 indicates that the SAE reconstruction perfectly recovers the original 
model's cross-entropy loss, while a score of 0 corresponds to the zero-ablation 
baseline. Unlike the KL divergence score, this metric emphasizes the impact of 
reconstruction quality on next-token prediction performance.

\paragraph{Explained variance.}
For a layer with activation $x \in \mathbb{R}^d$ and SAE reconstruction 
$\hat{x} = f_\theta(x)$, the explained variance is:
\begin{equation}
    \text{EV} = 1 - \frac{\mathrm{Var}(x - \hat{x})}{\mathrm{Var}(x)},
\end{equation}
where variance is computed over the token dimension. A higher explained variance 
indicates that the SAE accounts for more of the variance in the original activations. 
This metric is sensitive to systematic reconstruction failures that affect large 
portions of the activation space.

\paragraph{Mean squared error (MSE).}
The MSE between the original activation $x$ and its reconstruction $\hat{x}$ is:
\begin{equation}
    \text{MSE} = \mathbb{E}\left[\|x - f_\theta(x)\|^2\right],
\end{equation}
measured in the same units as the activation space. Unlike the normalized metrics above, 
MSE is an absolute measure and therefore varies substantially across layers and model 
sizes, as later layers tend to have activations with larger norms. We report MSE 
alongside the normalized metrics to provide a scale-sensitive view of reconstruction 
quality.

\paragraph{Cosine similarity.}
The mean cosine similarity between original and reconstructed activations is:
\begin{equation}
    \text{Cos Sim} = \mathbb{E}\left[
        \frac{\langle x,\, f_\theta(x) \rangle}{\|x\|\,\|f_\theta(x)\|}
    \right].
\end{equation}
This metric captures directional alignment between original and reconstructed 
activations, independent of their magnitudes. It is particularly informative when 
activation norms vary substantially across layers, since magnitude-preserving but 
direction-distorting perturbations appear benign under MSE yet are detected here.

\subsubsection{Feature Absorption} \label{app:metrics:absorption}
Feature absorption happens when SAEs, due to sparsity pressure, fail to represent a general feature directly. Instead of having both a \texttt{general} latent (e.g., “starts with S”) and \texttt{specific} latents (e.g., “short”), the SAE only activates the \texttt{specific} ones. This makes the \texttt{general} latent look unreliable $\implies$ The \texttt{general} latent may fire for many tokens but mysteriously may miss some cases, because those cases were \texttt{absorbed} into other latents.

\paragraph{SAEBench Utilization}
\begin{itemize}
    \item Picking a ground-truth concept (e.g., “starts with S”).
    \item Identifying the SAE latent most responsible for that concept.
    \item Checking where that latent fails to fire.
    \item Seeing if other latents step in to cover the missing cases.
\end{itemize}
The final absorption score is the fraction of a concept’s cases that are not covered by its main latent but are covered by other latents.

\paragraph{Intuitive Example}
Suppose the set of all S-words is:
\begin{align*}
\text{\texttt{short, small, sand, sun, sea, soap, star, slow, sound, sharp}}    
\end{align*}
SAEBench finds that latent $\texttt{L}_\texttt{S}$ is the best candidate for the concept \texttt{starts with S.} Let us assume that $\texttt{L}_\texttt{S}$ fires on 7 out of 10 tokens: \texttt{sand, sun, sea, soap, star, slow, sharp,} and misses the tokens: \texttt{short, small, sound.}
Thus, the recall of the latent $\texttt{L}_\texttt{S} = \frac{7}{10} = 70 \%$.
\\ 

\noindent SAEBench then checks whether other latents cover the missing three tokens. Let us assume the latent $\texttt{L}_\texttt{short}$ fires on tokens \texttt{short} and \texttt{small}, and $\texttt{L}_\texttt{sound}$ fires on the token \texttt{sound}. Now all missing cases are covered, but by different latents.
\\

\noindent Raw absorption = $30 \%$ (because 3/10 cases were absorbed away from the latent $\texttt{L}_\texttt{S}$). Each missing case is explained elsewhere, so they are not pure false negatives. So the score reported would be $30 \%$ absorption, with full compensation of the missing cases by other latents. If none of the missing tokens had been covered elsewhere, the absorption score would flag this as a bad collapse.
\\

\noindent Thus, the absorption score in SAEBench = “How often is a \texttt{general} feature incomplete, and to what extent do other latents pick up the missing pieces?”

\paragraph{Lower absorption scores → better interpretability!}
A low absorption score means the general latent is reliable. It means this latent fires consistently when the concept is present in the input. A tiny portion of any concept is fragmented into other \texttt{specific} latents.

Sparsity($L_0$) defines the number of latents that are allowed to fire per token. Low $L_0$ (high sparsity) implies very few latents fire, high $L_0$ (low sparsity) implies many latents can fire. From the SAEBench results, we can infer that at low $L_0$, SAEs aggressively minimize number of active latents $\implies$ high absorption score.

\subsubsection{Spurious Correlation Coefficient} \label{app:metrics:scr}
SCR score quantifies the capability of an SAE to find and remove spurious correlations (irrelevant signals) while preserving the signal we actually care about.

\paragraph{SAEBench Utilization}
SAEBench takes biased datasets. It uses probes and SAE latents to identify the spurious features (bias-creating features). Then it ablates those latents and checks if classifier accuracy on the actual task improves.

\paragraph{Intuitive Example}
Suppose we have a dataset with biased representation, featuring \texttt{male professors} and \texttt{female nurses}. The task is to predict a person's profession based on their individual characteristics. Here, gender is the spurious feature.\\

\noindent We train a classifier on this biased data, which will learn the \texttt{gender-profession} shortcut. Let us assume:
\begin{align*}
    \text{Accuracy on biased data} (A_{base}) = 95\%
\end{align*}
\noindent Next, we include \texttt{female professors} and \texttt{male nurses} in the dataset, and again train a classifier. Let us assume:
\begin{align*}
    \text{Accuracy on unbiased data} (A_{best}) = 75\%
\end{align*}
Ultimately, we use SAE latents to ablate spurious features. We find \texttt{gender}-related latents, set them to zero, and retrain the classifier on the biased data. Let us assume:
\begin{align*}
    \text{Accuracy on biased data with ablated latents} (A_{abl}) = 80\%
\end{align*}
Finally, we compute the SCR score as follows:
\begin{align*}
    SCR &= \frac{A_{abl} - A_{base}}{A_{best} - A_{base}} \\
    &= \frac{80 - 95}{75 - 95} = 0.75
\end{align*}

\paragraph{Higher SCR scores → better SAE!}
A score of $1.0$ indicates that SAE removed all spurious bias perfectly. Thus, SCR tells you whether an SAE can cut out spurious shortcuts like gender bias while keeping the real task intact.

\subsubsection{Targeted Probe Perturbation} \label{app:metrics:tpp}
TPP score quantifies the capability of an SAE to isolate individual concepts so that removing its latents only affects that concept and leaves others unchanged.

\paragraph{SAEBench Utilization}
SAEBench applies TPP to any multiclass dataset to test whether SAE features for one class are cleanly separated from others. We train probes (small classifiers) for each class. Next, we find the SAE latents most used by a specific probe and ablate them.
\\

\noindent \textit{TPP is a generalization of SCR}. Instead of needing biased datasets, it can test disentanglement on any multiclass dataset.

\paragraph{Intuitive Example}
Suppose we have an Amazon reviews dataset with \texttt{books, electronics, home supplies} and the goal is to check if \texttt{book} latents are isolated.
\\

\noindent We train probes for each class. Let us assume:
\begin{align*}
    \text{Accuracy for Books probe} &= 85\% \\
    \text{Accuracy for Electronics probe} &= 75\% \\
    \text{Accuracy for Home supplies probe} &= 80\%\\
\end{align*}
After ablating book probes, let us assume:
\begin{align*}
    \text{Accuracy for Books probe} &= 55\% \implies \Delta = -30\% \\
    \text{Accuracy for Electronics probe} &= 70\% \implies \Delta = -5\% \\
    \text{Accuracy for Home supplies probe} &= 79\% \implies \Delta = -1\% \\
\end{align*}
Since we are playing around with the target: \texttt{books}, we compute the TPP scores as follows:
\begin{align*}
    TPP &= \text{mean} \Delta_{i=j} - \text{mean} \Delta_{i\neq j} \\
    &= (-30) - (\frac{-5-1}{2})\\
    &= -30 +3 = -27\\ 
\end{align*}
Since TPP is normalized in practice, this would be reported as a high positive score showing the Book latents only hurt Book classification.

\paragraph{Higher TPP scores → better SAE!}
A higher TPP score is better, indicating that the SAE identified features specific to one class. Removing them tanks the target probe but leaves others intact. A lower TPP score means SAE latents are entangled.

\subsubsection{Threshold Sensitivity of SCR and TPP}
\label{sec:threshold}

SCR and TPP are each computed at seven ablation thresholds
$k \in \{2, 5, 10, 20, 50, 100, 500\}$, corresponding to the number of
top-scoring SAE latents ablated per concept direction.
The choice of $k$ is a hyperparameter of the evaluation, not of the model
or the pruning method, and a valid evaluation protocol should not depend
sensitively on it.

We verify this for \texttt{gemma-2-2b} at 50\% sparsity in
Figure~\ref{fig:threshold}, averaging across all layers. At every value of $k$ the ordering $\textsc{SparseGPT} \ge \textsc{Wanda} \ge \textsc{Magnitude}$ holds for both SCR and TPP\@.

\begin{figure}[t]
    \centering
    \includegraphics[width=\textwidth]{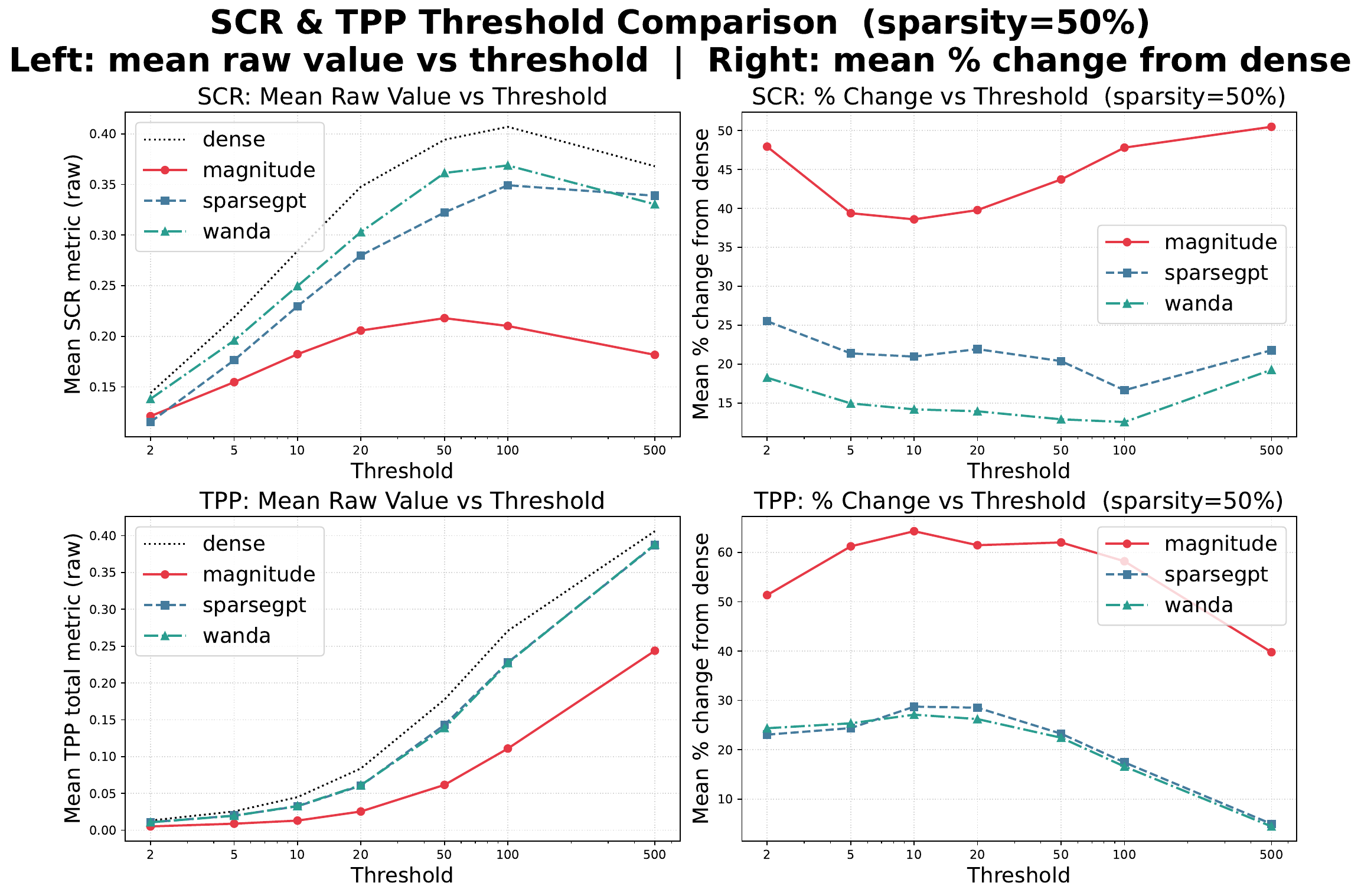}
    \caption{SCR/TPP metric stability across ablation thresholds for
    \texttt{gemma-2-2b} at $50\%$ sparsity.
    \underline{Left column}: mean raw metric values as a function of ablation
    threshold $k \in \{2, 5, 10, 20, 50, 100, 500\}$ for SCR (top)/
    TPP (bottom), alongside the dense baseline (dotted).
    \underline{Right column:} corresponding mean percentage change from the
    dense baseline.}
    \label{fig:threshold}
\end{figure}

Figure~\ref{fig:threshold} examines how SCR and TPP vary with the ablation threshold $k$. The raw metric values for all methods increase with $k$ for both SCR and TPP, as ablating more latents captures more of the relevant concept signal. The method ordering is consistent across all evaluated thresholds: \textsc{magnitude} pruning shows the largest degradation throughout, while \textsc{Wanda} and \textsc{SparseGpt} remain close to each other and substantially closer to the dense baseline. The percentage change from the dense baseline stabilises for $k \geq 10$ for the activation-aware methods, while \textsc{magnitude} pruning shows larger and less stable degradation across thresholds. Based on these observations, we can confirm the choice of $k = 10$ as the canonical threshold, as it lies in the stable regime and produces the clearest separation between pruning paradigms. 

\subsection{Per-Layer Results for \texttt{gemma-2-2b}}
\label{app:perlayer-gemma2b}

Full layer-by-layer SAEBench metrics for \texttt{gemma-2-2b} at $50\%$ sparsity, split by depth group. These results support the aggregated group-level summaries reported in Table~\ref{tab:results_by_group} and the per-layer profiles shown in Figures~\ref{fig:pct_change} and~\ref{fig:heatmap}.
\begin{table}[t]
\centering
\label{tab:results_per_layer_early}
\setlength{\tabcolsep}{3pt}
\small
\begin{tabular}{@{}llr ccccc cc c c@{}}
\toprule
\multirow{2}{*}{\textbf{Layers}} &\multirow{2}{*}{\textbf{Method}} &\multirow{2}{*}{\makecell{\textbf{Spar.}\\\textbf{(\%)}}} &\multicolumn{5}{c}{\textbf{Core}} &\multicolumn{2}{c}{\textbf{Absorption}} &\textbf{SCR} &\textbf{TPP} \\
\cmidrule(lr){4-8}\cmidrule(lr){9-10}\cmidrule(lr){11-11}\cmidrule(lr){12-12}
& & &\makecell{KL\\Score$\uparrow$} &\makecell{CE\\Score$\uparrow$} &\makecell{Expl.\\Var.$\uparrow$} &\makecell{Cos\\Sim$\uparrow$} &\makecell{MSE\\$\downarrow$} &\makecell{Abs.\\Frac.$\downarrow$} &\makecell{Full\\Abs.$\downarrow$} &\makecell{Top-10\\$\uparrow$} &\makecell{Top-10\\$\uparrow$} \\
\midrule
\multirow{4}{*}{\rotatebox[origin=c]{90}{\texttt{Layer 0}}}
 & \textit{Baseline} & \textit{0}
   & 0.9966 & 0.9967 & 0.9453 & 0.9766 & 0.1396
   & 0.0131 & 0.0324
   & 0.1488 & 0.0272 \\
\cmidrule(l){2-12}
 & \spm{Magnitude} & 50
   & 0.9915 & \textbf{0.9962} & 0.8984 & 0.9531 & 0.2754
   & 0.0141 & 0.0432
   & 0.1385 & \textbf{0.0327} \\
 & \spm{Wanda} & 50
   & \textbf{0.9964} & 0.9932 & \textbf{0.9375} & \textbf{0.9727} & \textbf{0.1582}
   & \textbf{0.0127} & \textbf{0.0339}
   & \textbf{0.1605} & 0.0242 \\
 & \spm{SparseGPT} & 50
   & 0.9953 & 0.9948 & \textbf{0.9375} & \textbf{0.9727} & 0.1611
   & 0.0133 & 0.0346
   & 0.1514 & 0.0206 \\
\midrule
\multirow{4}{*}{\rotatebox[origin=c]{90}{\texttt{Layer 1}}}
 & \textit{Baseline} & \textit{0}
   & 0.9958 & 0.9951 & 0.9219 & 0.9648 & 0.2402
   & 0.0989 & 0.0913
   & 0.1788 & 0.0336 \\
\cmidrule(l){2-12}
 & \spm{Magnitude} & 50
   & 0.9914 & \textbf{0.9962} & 0.8750 & 0.9375 & 0.3984
   & 0.1130 & 0.1012
   & 0.1456 & \textbf{0.0357} \\
 & \spm{Wanda} & 50
   & \textbf{0.9956} & 0.9914 & \textbf{0.9141} & \textbf{0.9609} & \textbf{0.2793}
   & 0.0967 & 0.0820
   & \textbf{0.1732} & 0.0241 \\
 & \spm{SparseGPT} & 50
   & 0.9942 & 0.9931 & 0.9102 & 0.9570 & \textbf{0.2793}
   & \textbf{0.0853} & \textbf{0.0807}
   & 0.1602 & 0.0271 \\
\midrule
\multirow{4}{*}{\rotatebox[origin=c]{90}{\texttt{Layer 2}}}
 & \textit{Baseline} & \textit{0}
   & 0.9960 & 0.9967 & 0.9141 & 0.9609 & 0.2578
   & 0.0900 & 0.0224
   & 0.1279 & 0.0184 \\
\cmidrule(l){2-12}
 & \spm{Magnitude} & 50
   & 0.9912 & \textbf{0.9962} & 0.8516 & 0.9219 & 0.4512
   & 0.1002 & 0.0458
   & 0.1024 & \textbf{0.0195} \\
 & \spm{Wanda} & 50
   & \textbf{0.9956} & 0.9914 & \textbf{0.8945} & \textbf{0.9492} & \textbf{0.3008}
   & 0.0880 & 0.0246
   & \textbf{0.1048} & 0.0105 \\
 & \spm{SparseGPT} & 50
   & 0.9943 & 0.9948 & 0.8867 & 0.9453 & 0.3086
   & \textbf{0.0748} & \textbf{0.0226}
   & 0.0870 & 0.0120 \\
\midrule
\multirow{4}{*}{\rotatebox[origin=c]{90}{\texttt{Layer 3}}}
 & \textit{Baseline} & \textit{0}
   & 0.9930 & 0.9934 & 0.8828 & 0.9453 & 0.3906
   & 0.1136 & 0.0570
   & 0.1250 & 0.0190 \\
\cmidrule(l){2-12}
 & \spm{Magnitude} & 50
   & 0.9843 & 0.9772 & 0.8125 & 0.9023 & 0.6211
   & \textbf{0.1027} & 0.0680
   & 0.0971 & 0.0108 \\
 & \spm{Wanda} & 50
   & \textbf{0.9912} & 0.9829 & \textbf{0.8594} & \textbf{0.9336} & \textbf{0.4297}
   & 0.1145 & \textbf{0.0614}
   & \textbf{0.1203} & 0.0142 \\
 & \spm{SparseGPT} & 50
   & 0.9899 & \textbf{0.9880} & 0.8516 & 0.9258 & 0.4512
   & 0.1182 & 0.0704
   & 0.1202 & \textbf{0.0153} \\
\midrule
\multirow{4}{*}{\rotatebox[origin=c]{90}{\texttt{Layer 4}}}
 & \textit{Baseline} & \textit{0}
   & 0.9964 & 0.9967 & 0.9062 & 0.9531 & 0.3320
   & 0.1330 & 0.0387
   & 0.1160 & 0.0278 \\
\cmidrule(l){2-12}
 & \spm{Magnitude} & 50
   & 0.9876 & 0.9848 & 0.8398 & 0.9180 & 0.5352
   & 0.1736 & 0.0736
   & 0.0824 & 0.0144 \\
 & \spm{Wanda} & 50
   & \textbf{0.9957} & 0.9914 & \textbf{0.8828} & \textbf{0.9414} & \textbf{0.3711}
   & \textbf{0.1365} & \textbf{0.0319}
   & \textbf{0.1104} & 0.0154 \\
 & \spm{SparseGPT} & 50
   & 0.9939 & \textbf{0.9931} & 0.8789 & 0.9375 & 0.3848
   & 0.1657 & 0.0552
   & 0.0951 & \textbf{0.0155} \\
\midrule
\multirow{4}{*}{\rotatebox[origin=c]{90}{\texttt{Layer 5}}}
 & \textit{Baseline} & \textit{0}
   & 0.9930 & 0.9918 & 0.8711 & 0.9336 & 0.5312
   & 0.0444 & 0.0212
   & 0.1551 & 0.0159 \\
\cmidrule(l){2-12}
 & \spm{Magnitude} & 50
   & 0.9752 & 0.9582 & 0.8203 & 0.9062 & 0.7812
   & 0.0543 & 0.0316
   & 0.1002 & \textbf{0.0194} \\
 & \spm{Wanda} & 50
   & \textbf{0.9916} & 0.9846 & \textbf{0.8516} & \textbf{0.9258} & \textbf{0.5547}
   & \textbf{0.0492} & \textbf{0.0261}
   & \textbf{0.1303} & 0.0182 \\
 & \spm{SparseGPT} & 50
   & 0.9888 & \textbf{0.9863} & 0.8477 & 0.9219 & 0.5859
   & 0.0570 & 0.0302
   & 0.1232 & 0.0182 \\
\midrule
\multirow{4}{*}{\rotatebox[origin=c]{90}{\texttt{Layer 6}}}
 & \textit{Baseline} & \textit{0}
   & 0.9937 & 0.9934 & 0.8750 & 0.9375 & 0.5352
   & 0.0393 & 0.0134
   & 0.1455 & 0.0334 \\
\cmidrule(l){2-12}
 & \spm{Magnitude} & 50
   & 0.9817 & 0.9772 & 0.8203 & 0.9062 & 0.7656
   & 0.0504 & 0.0252
   & 0.0988 & 0.0226 \\
 & \spm{Wanda} & 50
   & \textbf{0.9923} & 0.9846 & \textbf{0.8594} & \textbf{0.9297} & \textbf{0.5742}
   & \textbf{0.0454} & \textbf{0.0189}
   & \textbf{0.1360} & 0.0245 \\
 & \spm{SparseGPT} & 50
   & 0.9906 & \textbf{0.9897} & 0.8516 & 0.9258 & 0.5977
   & 0.0454 & 0.0217
   & 0.1189 & \textbf{0.0288} \\
\midrule
\multirow{4}{*}{\rotatebox[origin=c]{90}{\texttt{Layer 7}}}
 & \textit{Baseline} & \textit{0}
   & 0.9928 & 0.9918 & 0.8828 & 0.9375 & 0.6602
   & 0.0347 & 0.0164
   & 0.2941 & 0.0324 \\
\cmidrule(l){2-12}
 & \spm{Magnitude} & 50
   & 0.9794 & 0.9734 & 0.8281 & 0.9062 & 0.9688
   & \textbf{0.0471} & 0.0259
   & 0.1569 & 0.0084 \\
 & \spm{Wanda} & 50
   & \textbf{0.9913} & 0.9829 & \textbf{0.8594} & \textbf{0.9258} & \textbf{0.7109}
   & 0.0475 & \textbf{0.0201}
   & \textbf{0.2210} & 0.0173 \\
 & \spm{SparseGPT} & 50
   & 0.9899 & \textbf{0.9880} & 0.8477 & 0.9180 & 0.7656
   & 0.0508 & 0.0222
   & 0.2015 & \textbf{0.0244} \\
\midrule
\multirow{4}{*}{\rotatebox[origin=c]{90}{\texttt{Layer 8}}}
 & \textit{Baseline} & \textit{0}
   & 0.9925 & 0.9918 & 0.8789 & 0.9375 & 0.7539
   & 0.0395 & 0.0255
   & 0.2509 & 0.0354 \\
\cmidrule(l){2-12}
 & \spm{Magnitude} & 50
   & 0.9797 & 0.9772 & 0.8281 & 0.9062 & 1.0859
   & 0.0323 & 0.0231
   & 0.0951 & 0.0049 \\
 & \spm{Wanda} & 50
   & \textbf{0.9914} & 0.9829 & \textbf{0.8555} & \textbf{0.9219} & \textbf{0.8047}
   & \textbf{0.0314} & \textbf{0.0192}
   & \textbf{0.1824} & \textbf{0.0207} \\
 & \spm{SparseGPT} & 50
   & 0.9898 & \textbf{0.9880} & 0.8438 & 0.9141 & 0.8633
   & 0.0352 & 0.0245
   & 0.1383 & 0.0142 \\
\bottomrule
\end{tabular}
\caption{%
    Per-layer SAEBench metrics for \texttt{gemma-2-2b} at $50\%$ sparsity,
    layers~0--8 (Early group).
    The dense baseline is italicised for each layer.
    $\uparrow$ higher is better; $\downarrow$ lower is better.
}
\end{table}

\begin{table}[t]
\centering
\label{tab:results_per_layer_middle}
\setlength{\tabcolsep}{3pt}
\footnotesize
\begin{tabular}{@{}llr ccccc cc c c@{}}
\toprule
\multirow{2}{*}{\textbf{Layers}} &\multirow{2}{*}{\textbf{Method}} &\multirow{2}{*}{\makecell{\textbf{Spar.}\\\textbf{(\%)}}} &\multicolumn{5}{c}{\textbf{Core}} &\multicolumn{2}{c}{\textbf{Absorption}} &\textbf{SCR} &\textbf{TPP} \\
\cmidrule(lr){4-8}\cmidrule(lr){9-10}\cmidrule(lr){11-11}\cmidrule(lr){12-12}
& & &\makecell{KL\\Score$\uparrow$} &\makecell{CE\\Score$\uparrow$} &\makecell{Expl.\\Var.$\uparrow$} &\makecell{Cos\\Sim$\uparrow$} &\makecell{MSE\\$\downarrow$} &\makecell{Abs.\\Frac.$\downarrow$} &\makecell{Full\\Abs.$\downarrow$} &\makecell{Top-10\\$\uparrow$} &\makecell{Top-10\\$\uparrow$} \\
\midrule
\multirow{4}{*}{\rotatebox[origin=c]{90}{\texttt{Layer 9}}}
 & \textit{Baseline} & \textit{0}
   & 0.9917 & 0.9901 & 0.8711 & 0.9297 & 0.9844
   & 0.0555 & 0.0498
   & 0.2386 & 0.0247 \\
\cmidrule(l){2-12}
 & \spm{Magnitude} & 50
   & 0.9664 & 0.9544 & 0.7773 & 0.8789 & 1.3438
   & 0.0555 & 0.0498
   & 0.0858 & 0.0118 \\
 & \spm{Wanda} & 50
   & \textbf{0.9915} & 0.9846 & \textbf{0.8438} & \textbf{0.9141} & \textbf{1.0234}
   & \textbf{0.0429} & \textbf{0.0310}
   & \textbf{0.1830} & \textbf{0.0170} \\
 & \spm{SparseGPT} & 50
   & 0.9888 & \textbf{0.9880} & 0.8281 & 0.9062 & 1.0938
   & 0.0527 & 0.0414
   & 0.1565 & 0.0138 \\
\midrule
\multirow{4}{*}{\rotatebox[origin=c]{90}{\texttt{Layer 10}}}
 & \textit{Baseline} & \textit{0}
   & 0.9909 & 0.9901 & 0.8672 & 0.9258 & 1.1953
   & 0.0708 & 0.0541
   & 0.2430 & 0.0220 \\
\cmidrule(l){2-12}
 & \spm{Magnitude} & 50
   & 0.9733 & 0.9620 & 0.7812 & 0.8789 & 1.6719
   & 0.0967 & 0.0769
   & 0.0921 & 0.0081 \\
 & \spm{Wanda} & 50
   & \textbf{0.9914} & 0.9863 & \textbf{0.8438} & \textbf{0.9141} & \textbf{1.2109}
   & \textbf{0.0701} & \textbf{0.0518}
   & \textbf{0.1949} & 0.0153 \\
 & \spm{SparseGPT} & 50
   & 0.9887 & \textbf{0.9897} & 0.8203 & 0.9062 & 1.3281
   & 0.0834 & 0.0659
   & 0.1593 & \textbf{0.0159} \\
\midrule
\multirow{4}{*}{\rotatebox[origin=c]{90}{\texttt{Layer 11}}}
 & \textit{Baseline} & \textit{0}
   & 0.9900 & 0.9885 & 0.8750 & 0.9258 & 1.4844
   & 0.1061 & 0.0916
   & 0.2557 & 0.0354 \\
\cmidrule(l){2-12}
 & \spm{Magnitude} & 50
   & 0.9649 & 0.9430 & 0.7695 & 0.8711 & 2.2344
   & 0.1256 & 0.1183
   & 0.0720 & 0.0064 \\
 & \spm{Wanda} & 50
   & \textbf{0.9908} & 0.9863 & \textbf{0.8516} & \textbf{0.9102} & \textbf{1.5000}
   & \textbf{0.1020} & \textbf{0.0885}
   & \textbf{0.2178} & \textbf{0.0265} \\
 & \spm{SparseGPT} & 50
   & 0.9883 & \textbf{0.9897} & 0.8203 & 0.8984 & 1.7109
   & 0.1181 & 0.1046
   & 0.1827 & 0.0184 \\
\midrule
\multirow{4}{*}{\rotatebox[origin=c]{90}{\texttt{Layer 12}}}
 & \textit{Baseline} & \textit{0}
   & 0.9896 & 0.9885 & 0.8750 & 0.9219 & 1.5391
   & 0.0992 & 0.0840
   & 0.2924 & 0.0250 \\
\cmidrule(l){2-12}
 & \spm{Magnitude} & 50
   & 0.9736 & 0.9696 & 0.7773 & 0.8750 & 2.0781
   & 0.1212 & 0.0971
   & 0.1018 & 0.0056 \\
 & \spm{Wanda} & 50
   & \textbf{0.9904} & 0.9863 & \textbf{0.8516} & \textbf{0.9102} & \textbf{1.5703}
   & \textbf{0.0987} & \textbf{0.0825}
   & \textbf{0.2181} & \textbf{0.0242} \\
 & \spm{SparseGPT} & 50
   & 0.9879 & \textbf{0.9897} & 0.8164 & 0.8984 & 1.8281
   & 0.1112 & 0.0966
   & 0.2046 & 0.0206 \\
\midrule
\multirow{4}{*}{\rotatebox[origin=c]{90}{\texttt{Layer 13}}}
 & \textit{Baseline} & \textit{0}
   & 0.9885 & 0.9868 & 0.8750 & 0.9297 & 1.9766
   & 0.1344 & 0.1252
   & 0.2864 & 0.0275 \\
\cmidrule(l){2-12}
 & \spm{Magnitude} & 50
   & 0.9692 & 0.9582 & 0.7812 & 0.8828 & 3.0156
   & 0.1609 & 0.1517
   & \textbf{0.2095} & 0.0125 \\
 & \spm{Wanda} & 50
   & \textbf{0.9893} & 0.9846 & \textbf{0.8477} & \textbf{0.9180} & \textbf{2.1250}
   & \textbf{0.1279} & \textbf{0.1183}
   & 0.2008 & \textbf{0.0318} \\
 & \spm{SparseGPT} & 50
   & 0.9871 & \textbf{0.9897} & 0.8242 & 0.9062 & 2.5156
   & 0.1422 & 0.1401
   & 0.1783 & 0.0152 \\
\midrule
\multirow{4}{*}{\rotatebox[origin=c]{90}{\texttt{Layer 14}}}
 & \textit{Baseline} & \textit{0}
   & 0.9885 & 0.9868 & 0.8711 & 0.9297 & 2.1719
   & 0.0333 & 0.0288
   & 0.3756 & 0.0440 \\
\cmidrule(l){2-12}
 & \spm{Magnitude} & 50
   & 0.9562 & 0.9316 & 0.7734 & 0.8750 & 3.3438
   & 0.0399 & 0.0390
   & 0.1078 & 0.0102 \\
 & \spm{Wanda} & 50
   & \textbf{0.9893} & 0.9863 & \textbf{0.8438} & \textbf{0.9180} & \textbf{2.3594}
   & \textbf{0.0354} & \textbf{0.0291}
   & \textbf{0.2755} & \textbf{0.0439} \\
 & \spm{SparseGPT} & 50
   & 0.9869 & \textbf{0.9897} & 0.8281 & 0.9102 & 2.7812
   & 0.0401 & 0.0351
   & 0.2309 & 0.0380 \\
\midrule
\multirow{4}{*}{\rotatebox[origin=c]{90}{\texttt{Layer 15}}}
 & \textit{Baseline} & \textit{0}
   & 0.9872 & 0.9868 & 0.8906 & 0.9375 & 2.3594
   & 0.0314 & 0.0274
   & 0.3937 & 0.0584 \\
\cmidrule(l){2-12}
 & \spm{Magnitude} & 50
   & 0.9700 & 0.9734 & 0.8047 & 0.8945 & 4.0938
   & 0.0382 & 0.0390
   & 0.1860 & 0.0090 \\
 & \spm{Wanda} & 50
   & \textbf{0.9881} & 0.9846 & \textbf{0.8594} & \textbf{0.9258} & \textbf{2.6719}
   & \textbf{0.0308} & \textbf{0.0262}
   & \textbf{0.2847} & \textbf{0.0378} \\
 & \spm{SparseGPT} & 50
   & 0.9857 & \textbf{0.9897} & 0.8398 & 0.9141 & 3.1094
   & 0.0377 & 0.0347
   & 0.2341 & 0.0356 \\
\midrule
\multirow{4}{*}{\rotatebox[origin=c]{90}{\texttt{Layer 16}}}
 & \textit{Baseline} & \textit{0}
   & 0.9862 & 0.9852 & 0.8789 & 0.9336 & 3.2969
   & 0.0318 & 0.0254
   & 0.3583 & 0.0752 \\
\cmidrule(l){2-12}
 & \spm{Magnitude} & 50
   & 0.9566 & 0.9468 & 0.7812 & 0.8789 & 5.7500
   & 0.0334 & 0.0319
   & 0.2037 & 0.0116 \\
 & \spm{Wanda} & 50
   & \textbf{0.9868} & 0.9846 & \textbf{0.8555} & \textbf{0.9219} & \textbf{3.6250}
   & \textbf{0.0283} & \textbf{0.0221}
   & \textbf{0.3144} & 0.0377 \\
 & \spm{SparseGPT} & 50
   & 0.9842 & \textbf{0.9897} & 0.8320 & 0.9102 & 4.2188
   & 0.0317 & 0.0295
   & 0.3069 & \textbf{0.0398} \\
\midrule
\multirow{4}{*}{\rotatebox[origin=c]{90}{\texttt{Layer 17}}}
 & \textit{Baseline} & \textit{0}
   & 0.9859 & 0.9852 & 0.8828 & 0.9375 & 3.5156
   & 0.0381 & 0.0409
   & 0.2987 & 0.0693 \\
\cmidrule(l){2-12}
 & \spm{Magnitude} & 50
   & 0.9496 & 0.9392 & 0.7578 & 0.8672 & 6.1875
   & 0.0434 & 0.0538
   & 0.1120 & 0.0048 \\
 & \spm{Wanda} & 50
   & \textbf{0.9865} & 0.9846 & \textbf{0.8594} & \textbf{0.9258} & \textbf{3.9062}
   & 0.0581 & 0.0580
   & \textbf{0.2699} & \textbf{0.0389} \\
 & \spm{SparseGPT} & 50
   & 0.9839 & \textbf{0.9880} & 0.8398 & 0.9141 & 4.4688
   & \textbf{0.0410} & \textbf{0.0479}
   & 0.2390 & 0.0351 \\
\bottomrule
\end{tabular}
\caption{%
    Per-layer SAEBench metrics for \texttt{gemma-2-2b} at $50\%$ sparsity, layers~9--17 (Middle group).
    The dense baseline is italicised for each layer.
    $\uparrow$ higher is better; $\downarrow$ lower is better.
}
\end{table}

\begin{table*}[t]
\centering
\label{tab:results_per_layer_late}
\setlength{\tabcolsep}{3pt}
\footnotesize
\begin{tabular}{@{}llr ccccc cc c c@{}}
\toprule
\multirow{2}{*}{\textbf{Layers}} &\multirow{2}{*}{\textbf{Method}} &\multirow{2}{*}{\makecell{\textbf{Spar.}\\\textbf{(\%)}}} &\multicolumn{5}{c}{\textbf{Core}} &\multicolumn{2}{c}{\textbf{Absorption}} &\textbf{SCR} &\textbf{TPP} \\
\cmidrule(lr){4-8}\cmidrule(lr){9-10}\cmidrule(lr){11-11}\cmidrule(lr){12-12}
& & &\makecell{KL\\Score$\uparrow$} &\makecell{CE\\Score$\uparrow$} &\makecell{Expl.\\Var.$\uparrow$} &\makecell{Cos\\Sim$\uparrow$} &\makecell{MSE\\$\downarrow$} &\makecell{Abs.\\Frac.$\downarrow$} &\makecell{Full\\Abs.$\downarrow$} &\makecell{Top-10\\$\uparrow$} &\makecell{Top-10\\$\uparrow$} \\
\midrule
\multirow{4}{*}{\rotatebox[origin=c]{90}{\texttt{Layer 18}}}
 & \textit{Baseline} & \textit{0}
   & 0.9855 & 0.9852 & 0.8867 & 0.9375 & 4.4062
   & 0.0951 & 0.1101
   & 0.3244 & 0.0655 \\
\cmidrule(l){2-12}
 & \spm{Magnitude} & 50
   & 0.9426 & 0.9278 & 0.7617 & 0.8672 & 6.9375
   & \textbf{0.0795} & 0.1189
   & 0.1193 & 0.0063 \\
 & \spm{Wanda} & 50
   & \textbf{0.9854} & 0.9846 & \textbf{0.8672} & \textbf{0.9297} & \textbf{4.8750}
   & 0.0813 & \textbf{0.0921}
   & \textbf{0.2761} & 0.0511 \\
 & \spm{SparseGPT} & 50
   & 0.9831 & \textbf{0.9880} & 0.8477 & 0.9180 & 5.5000
   & 0.0815 & 0.1070
   & 0.2658 & \textbf{0.0573} \\
\midrule
\multirow{4}{*}{\rotatebox[origin=c]{90}{\texttt{Layer 19}}}
 & \textit{Baseline} & \textit{0}
   & 0.9844 & 0.9835 & 0.8828 & 0.9375 & 5.5312
   & 0.0823 & 0.0935
   & 0.3398 & 0.0660 \\
\cmidrule(l){2-12}
 & \spm{Magnitude} & 50
   & 0.9600 & 0.9658 & 0.7773 & 0.8789 & 9.3750
   & 0.0787 & 0.1092
   & 0.1583 & 0.0126 \\
 & \spm{Wanda} & 50
   & \textbf{0.9845} & 0.9846 & \textbf{0.8633} & \textbf{0.9297} & \textbf{6.0312}
   & 0.0861 & \textbf{0.0931}
   & \textbf{0.3061} & 0.0447 \\
 & \spm{SparseGPT} & 50
   & 0.9823 & \textbf{0.9880} & 0.8438 & 0.9141 & 6.8750
   & \textbf{0.0767} & 0.0975
   & 0.2888 & \textbf{0.0482} \\
\midrule
\multirow{4}{*}{\rotatebox[origin=c]{90}{\texttt{Layer 20}}}
 & \textit{Baseline} & \textit{0}
   & 0.9827 & 0.9819 & 0.8750 & 0.9336 & 7.2500
   & 0.1144 & 0.1249
   & 0.4253 & 0.0677 \\
\cmidrule(l){2-12}
 & \spm{Magnitude} & 50
   & 0.9571 & 0.9506 & 0.7617 & 0.8711 & 12.5000
   & 0.1091 & 0.1540
   & 0.2972 & 0.0093 \\
 & \spm{Wanda} & 50
   & \textbf{0.9829} & 0.9829 & \textbf{0.8633} & \textbf{0.9297} & \textbf{7.6562}
   & \textbf{0.0949} & \textbf{0.1034}
   & 0.3849 & \textbf{0.0641} \\
 & \spm{SparseGPT} & 50
   & 0.9812 & \textbf{0.9863} & 0.8438 & 0.9180 & 8.8125
   & 0.1039 & 0.1251
   & \textbf{0.3876} & 0.0523 \\
\midrule
\multirow{4}{*}{\rotatebox[origin=c]{90}{\texttt{Layer 21}}}
 & \textit{Baseline} & \textit{0}
   & 0.9798 & 0.9786 & 0.8750 & 0.9336 & 10.0625
   & 0.1283 & 0.1407
   & 0.4160 & 0.0846 \\
\cmidrule(l){2-12}
 & \spm{Magnitude} & 50
   & 0.9551 & 0.9506 & 0.7891 & 0.8867 & 18.7500
   & \textbf{0.1043} & 0.1465
   & 0.4425 & 0.0144 \\
 & \spm{Wanda} & 50
   & \textbf{0.9800} & 0.9795 & \textbf{0.8672} & \textbf{0.9297} & \textbf{10.3750}
   & 0.1150 & \textbf{0.1263}
   & \textbf{0.4694} & 0.0528 \\
 & \spm{SparseGPT} & 50
   & 0.9788 & \textbf{0.9828} & 0.8516 & 0.9219 & 12.1250
   & 0.1182 & 0.1399
   & 0.4662 & \textbf{0.0694} \\
\midrule
\multirow{4}{*}{\rotatebox[origin=c]{90}{\texttt{Layer 22}}}
 & \textit{Baseline} & \textit{0}
   & 0.9777 & 0.9769 & 0.8633 & 0.9297 & 13.9375
   & 0.1217 & 0.1252
   & 0.4038 & 0.0619 \\
\cmidrule(l){2-12}
 & \spm{Magnitude} & 50
   & 0.9520 & 0.9468 & 0.7578 & 0.8750 & 27.2500
   & \textbf{0.0751} & \textbf{0.0979}
   & \textbf{0.4423} & 0.0034 \\
 & \spm{Wanda} & 50
   & \textbf{0.9782} & 0.9777 & \textbf{0.8594} & \textbf{0.9258} & \textbf{14.3750}
   & 0.1108 & 0.1132
   & 0.3853 & 0.0440 \\
 & \spm{SparseGPT} & 50
   & 0.9774 & \textbf{0.9828} & 0.8398 & 0.9180 & 17.0000
   & 0.1170 & 0.1314
   & 0.3808 & \textbf{0.0524} \\
\midrule
\multirow{4}{*}{\rotatebox[origin=c]{90}{\texttt{Layer 23}}}
 & \textit{Baseline} & \textit{0}
   & 0.9752 & 0.9736 & 0.8555 & 0.9258 & 18.3750
   & 0.1120 & 0.1156
   & 0.3858 & 0.0634 \\
\cmidrule(l){2-12}
 & \spm{Magnitude} & 50
   & 0.9541 & 0.9468 & 0.7422 & 0.8672 & 37.5000
   & \textbf{0.0672} & \textbf{0.0836}
   & \textbf{0.4309} & 0.0098 \\
 & \spm{Wanda} & 50
   & \textbf{0.9757} & 0.9743 & \textbf{0.8516} & \textbf{0.9219} & \textbf{18.6250}
   & 0.1053 & 0.1080
   & 0.3862 & \textbf{0.0496} \\
 & \spm{SparseGPT} & 50
   & 0.9749 & \textbf{0.9794} & 0.8320 & 0.9141 & 22.3750
   & 0.1042 & 0.1163
   & 0.3495 & 0.0397 \\
\midrule
\multirow{4}{*}{\rotatebox[origin=c]{90}{\texttt{Layer 24}}}
 & \textit{Baseline} & \textit{0}
   & 0.9695 & 0.9671 & 0.8438 & 0.9219 & 24.8750
   & 0.1410 & 0.1928
   & 0.3933 & 0.0638 \\
\cmidrule(l){2-12}
 & \spm{Magnitude} & 50
   & 0.9528 & 0.9468 & 0.7266 & 0.8633 & 55.0000
   & \textbf{0.0747} & \textbf{0.1230}
   & 0.3507 & 0.0136 \\
 & \spm{Wanda} & 50
   & \textbf{0.9698} & 0.9675 & \textbf{0.8398} & \textbf{0.9219} & \textbf{25.2500}
   & 0.1184 & 0.1606
   & \textbf{0.4123} & 0.0453 \\
 & \spm{SparseGPT} & 50
   & 0.9698 & \textbf{0.9725} & 0.8164 & 0.9102 & 30.8750
   & 0.1087 & 0.1578
   & 0.3605 & \textbf{0.0469} \\
\midrule
\multirow{4}{*}{\rotatebox[origin=c]{90}{\texttt{Layer 25}}}
 & \textit{Baseline} & \textit{0}
   & 0.9662 & 0.9654 & 0.8633 & 0.9258 & 29.8750
   & 0.1717 & 0.1925
   & 0.4197 & 0.0722 \\
\cmidrule(l){2-12}
 & \spm{Magnitude} & 50
   & 0.9545 & \textbf{0.9734} & 0.7188 & 0.8516 & 68.5000
   & \textbf{0.0761} & \textbf{0.1081}
   & 0.3124 & 0.0240 \\
 & \spm{Wanda} & 50
   & \textbf{0.9679} & 0.9675 & \textbf{0.8438} & \textbf{0.9141} & \textbf{30.7500}
   & 0.1239 & 0.1463
   & 0.3742 & 0.0636 \\
 & \spm{SparseGPT} & 50
   & 0.9671 & 0.9725 & 0.8359 & 0.9062 & 37.2500
   & 0.1331 & 0.1591
   & \textbf{0.3850} & \textbf{0.0714} \\
\bottomrule
\end{tabular}
\caption{%
    Per-layer SAEBench metrics for \texttt{gemma-2-2b} at $50\%$ sparsity,
    layers~18--25 (Late group).
    The dense baseline is italicised for each layer.
    $\uparrow$ higher is better; $\downarrow$ lower is better.
}
\end{table*}

\subsection{Empirical Perturbation Energy}
\label{app:pert-energy}

Table~\ref{tab:pert-energy} reports the empirical perturbation energy
$\varepsilon^2 = \mathbb{E}\!\left[\lVert \Delta W\, x_{\mathrm{in}}
\rVert^2\right]$, averaged within each layer group, for each pruning
method on \texttt{gemma-2-2b} at 50\% sparsity, estimated over the
calibration set of Sec.~\ref{sec:setup}. The ordering
$\varepsilon^2(\textsc{SparseGPT}) < \varepsilon^2(\textsc{Wanda}) < 
\varepsilon^2(\textsc{Magnitude})$ predicted by
Corollary~\ref{cor:paradigm} is confirmed at every layer group:
\textsc{Magnitude} incurs roughly twice the perturbation energy of the
activation-aware methods. This moderate $\varepsilon^2$ ratio is amplified into the large SAEBench gaps of Table~\ref{tab:results_by_group} through the quadratic and cross terms of Theorem~\ref{thm:main}, consistent with the bound's structure.

Notably, per-layer $\varepsilon^2$ does not mirror the layer sensitivity
profile of Figure~\ref{fig:layer_sensitivity}: middle layers exhibit the lowest
local $\varepsilon^2$ yet the highest SAE degradation. Middle-layer
vulnerability is therefore driven not by locally larger perturbation but
by the accumulation of upstream perturbation from early layers propagating
through residual connections. This further motivates the layer-wise
allocation of Sec.~\ref{sec:results}: protecting early layers reduces the
perturbation that cascades into vulnerable middle layers.

\begin{table}[t]
\centering
\begin{tabular}{lccc}
\toprule
Layer Group & \textsc{Magnitude} $\varepsilon^2$ & \textsc{Wanda} $\varepsilon^2$ & \textsc{SparseGPT} $\varepsilon^2$ \\
\midrule
Early (0--8)    & 57.36 & 31.87 & 26.66 \\
Middle (9--17)  & 47.13 & 21.16 & 20.70 \\
Late (18--25)   & 53.69 & 26.45 & 24.95 \\
\bottomrule
\end{tabular}
\caption{Empirical perturbation energy $\varepsilon^2$ by layer group for
\texttt{gemma-2-2b} at 50\% sparsity. Lower is better; the ordering
predicted by Corollary~\ref{cor:paradigm} holds in every group.}
\label{tab:pert-energy}
\end{table}
\end{document}